\def\eqref#1{equation~\ref{#1}}
\def\1{\bm{1}}
\DeclareMathAlphabet{\mathsfit}{\encodingdefault}{\sfdefault}{m}{sl}
\SetMathAlphabet{\mathsfit}{bold}{\encodingdefault}{\sfdefault}{bx}{n}
\definecolor{adv_red}{RGB}{225,100,100}
\definecolor{adv_blue}{RGB}{60,120,215}
\definecolor{linkcolor}{rgb}{0.94,0,0.55}
\definecolor{citecolor}{rgb}{0,0.078,0.451}
\newtheorem{theorem}{Theorem}
\newcommand{\PJ}[2]{\mathbb{P}_{\mathcal{#1}\mathcal{#2}}} 
\newcommand{\PI}[2]{\mathbb{P}_{\mathcal{#1}}\otimes\mathbb{P}_{\mathcal{#2}}}
\newcommand{\defeq}{:=}
\newcommand{\BU}[1]{\color{black}{#1}}
\newcommand{\BV}[1]{\color{black!10!blue}{\em #1}}
\newcommand{\rebuttal}[1]{{\color{black}{#1}}}
\newcommand{\RomanNumeralCaps}[1]
    {\MakeUppercase{\romannumeral #1}}
\title{Federated Adversarial Domain Adaptation}
\author{Xingchao Peng\\
Boston University\\
Boston, MA 02215, USA \\
\texttt{xpeng@bu.edu} \\
\And
Zijun Huang \\
Columbia University\\
New York City, NY 10027, USA\\
\texttt{zijun.huang@columbia.edu} \\
\And
Yizhe Zhu \\
Rutgers University \\
Piscataway, NJ 08854, USA \\
\texttt{yz530@scarletmail.rutgers.edu}\\
\And
Kate Saenko \\
Boston University\\
Boston, MA 02215, USA \\
\texttt{saenko@bu.edu}\\
}
\begin{document}

\maketitle

\begin{abstract}
Federated learning improves data privacy and efficiency in machine learning performed over networks of distributed devices, such as mobile phones, IoT and wearable devices, etc. Yet models trained with federated learning can still fail to generalize to new devices due to the problem of domain shift. Domain shift occurs when the labeled data collected by source nodes statistically differs from the target node's unlabeled data. In this work, we present a principled approach to the problem of federated domain adaptation, which aims to align the representations learned among the different nodes with the data distribution of the target node. Our approach extends adversarial adaptation techniques to the constraints of the federated setting. In addition, we devise a dynamic attention mechanism and leverage feature disentanglement to enhance knowledge transfer. Empirically, we perform extensive experiments on several image and text classification tasks and show promising results under unsupervised federated domain adaptation setting.
\end{abstract}

\section{Introduction}
\label{intro}
Data generated by networks of mobile and IoT devices poses unique challenges for training machine learning models. Due to the growing storage/computational power of these devices and concerns about data privacy, it is increasingly attractive to keep data and computation locally on the device~\citep{federated_mtl}. 
\textit{Federated Learning (FL)} ~\citep{mohassel2018aby,federated_secure,secureml} provides a privacy-preserving mechanism to leverage such decen-tralized data and computation resources to train machine learning models. The main idea behind federated learning is to have each node learn on its own local data and not share either the data or the model parameters.

While federated learning promises better privacy and efficiency, existing methods ignore the fact that the data on each node are collected in a non-\textit{i.i.d} manner, leading to \textit{domain shift} between nodes~\citep{datashift_book2009}. For example, one device may take photos mostly indoors, while another mostly outdoors. In this paper, we address the problem of transferring knowledge from the decentralized nodes to a new node with a different data domain, without requiring any additional supervision from the user. We define this novel problem \textit{Unsupervised Federated Domain Adaptation} (UFDA), as illustrated in Figure~\ref{figure1_ufdn}.

There is a large body of existing work on unsupervised domain adaptation~\citep{long2015,DANN,adda,CycleGAN2017, gong2012geodesic, long2018conditional}, but the federated setting presents several additional challenges. First, the data are stored locally and cannot be shared, which hampers mainstream domain adaptation methods as they need to access both the labeled source and unlabeled target data~\citep{ddc, JAN, ghifary2016deep, SunS16a, DANN, adda}. Second, the model parameters are trained separately for each node and converge at different speeds, while also offering different contributions to the target node depending on how close the two domains are. Finally, the knowledge learned from source nodes is highly entangled~\citep{bengio2013representation}, which can possibly lead to \textit{negative transfer}~\citep{pan2010survey}.

In this paper, we propose a solution to the above problems called \textit{Federated Adversarial Domain Adaptation} (FADA) which aims to tackle domain shift in a federated learning system through adversarial techniques.
Our approach preserves data privacy by training one model per source node  and updating the target model with the aggregation of source gradients, but does so in a way that reduces domain shift.  
First, we analyze the federated domain adaptation problem from a theoretical perspective and provide a generalization bound. Inspired by our theoretical results, we propose an efficient adaptation algorithm based on adversarial adaptation and representation disentanglement applied to the federated setting. We also devise a \textit{dynamic attention} model to cope with the varying convergence rates in the federated learning system.
We conduct extensive experiments on real-world datasets, including image recognition and natural language tasks. Compared to baseline methods, we improve adaptation performance on all tasks, demonstrating the effectiveness of our devised model.

\begin{figure*}[t]
\vspace{-0.8cm}
    \begin{minipage}{\hsize}
      \centering
      \subfigure[\scriptsize Federated Domain Adaptation ]
      {\includegraphics[width=0.32\hsize]{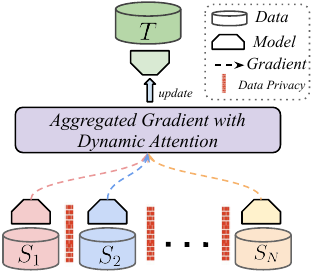}
      \label{figure1_ufdn} }
     \centering
      \subfigure[\scriptsize  Federated Adversarial Domain Adaptation]
      {\includegraphics[width=0.66\hsize]{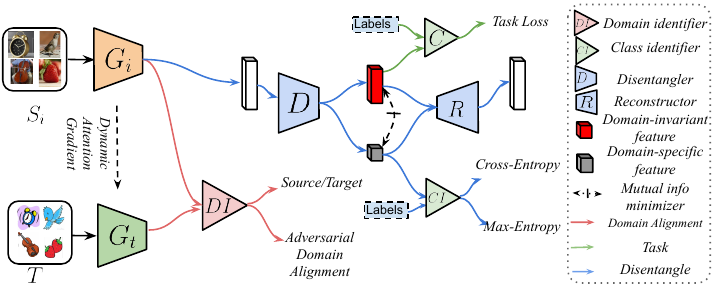}
      \label{figure1_architecture}}
     
    \end{minipage}
  \caption{\small \rebuttal{(a) We propose an approach for the UFDA setting, where data are not shareable between different domains. In our approach, models are trained separately on each source domain and their gradients are aggregated with \textit{dynamic attention} mechanism to update the target model. (b) Our FADA model learns to extract \textit{domain-invariant} features using adversarial domain alignment (\textcolor{adv_red}{red lines}) and a feature \textit{disentangler} (\textcolor{adv_blue}{blue lines}).}}
  \label{fig_overview}
  \vspace{-0.5cm}
\end{figure*}
\section{Related Work}
\noindent \textbf{Unsupervised Domain Adaptation} Unsupervised Domain Adaptation (UDA) aims to transfer the knowledge learned from a labeled source domain to an unlabeled target domain. Domain adaptation approaches proposed over the past decade include discrepancy-based methods~\citep{ddc,JAN,ghifary2014domain,SunS16a,peng2017synthetic}, reconstruction-based UDA models~\citep{yi2017dualgan, CycleGAN2017, hoffman2017cycada, kim2017learning}, and adversary-based approaches~\citep{cogan,adda,ufdn,DANN}. For example, \citet{DANN} propose a \textit{gradient reversal} layer to perform adversarial training to a domain discriminator, inspired by the idea of adversarial learning. \citet{adda} address unsupervised domain adaptation by adapting a deep CNN-based feature extractor/classifier across source and target domains via adversarial training. \citet{ben2010theory} introduce an $\mathcal{H}\Delta\mathcal{H}$-divergence to evaluate the domain shift and provide a generalization error bound for domain adaptation. These methods assume
the data are centralized on one server, limiting their applicability to the distributed learning system. 

\noindent \textbf{Federated Learning} Federated learning~\citep{mohassel2018aby,rivest1978data,federated_secure,secureml} is a decentralized learning approach which enables multiple clients to collaboratively learn a machine learning model while keeping the training data and model parameters on local devices. Inspired by Homomorphic Encryption~\citep{rivest1978data}, \cite{gilad2016cryptonets} propose CryptoNets to enhance the efficiency of data encryption, achieving higher federated learning performance. \cite{federated_secure} introduce a secure aggregation scheme to update the machine learning models under their federated learning framework. Recently, \cite{secureml} propose SecureML to support privacy-preserving collaborative training in a multi-client federated learning system. However, these methods mainly aim to learn a single global model across the data and have no convergence guarantee, which limits their ability to deal with non-\textit{i.i.d.} data. To address the non-\textit{i.i.d} data, \cite{federated_mtl} introduce federated multi-task learning, which learns a separate model for each node. \cite{ftl} propose semi-supervised federated transfer learning in a privacy-preserving setting. However, their models involve full or semi-supervision. The work proposed here is, to our best knowledge, the first federated learning framework to consider unsupervised domain adaptation.


\noindent \textbf{Feature Disentanglement} Deep neural networks are known to extract features where multiple hidden factors are highly entangled. Learning disentangled representations can help remove irrelevant and domain-specific features and model only the relevant factors of data variation. To this end, recent work~\citep{mathieu2016disentangling, makhzani2015adversarial, ufdn, cisac_gan} explores the learning of interpretable representations using generative adversarial networks (GANs)~\citep{gan} and variational autoencoders (VAEs)~\citep{vae}. Under the fully supervised setting, \citep{cisac_gan} propose an auxiliary classifier GAN (AC-GAN) to achieve representation disentanglement. \citep{ufdn} introduce a unified feature disentanglement framework to learn domain-invariant features from data across different domains. \citep{kingma2014semi} also extend VAEs into the semi-supervised setting for representation disentanglement. \citep{drit} propose to disentangle the features into a domain-invariant content space and a domain-specific attributes space, producing diverse outputs without paired training data. Inspired by these works, we propose a method to disentangle the \textit{domain-invariant} features from \textit{domain-specific} features, using an adversarial training process. In addition, we propose to minimize the mutual information between the \textit{domain-invariant} features and \textit{domain-specific} features to enhance the feature disentanglement. 
\section{Generalization Bound for Federated Domain Adaptation}
We first define the notation and review a typical theoretical error bound for single-source domain adaptation~\citep{ben2007analysis, da_bound_bkitzer} devised by Ben-David \textit{et al}. Then we describe our derived error bound for unsupervised federated domain adaptation. We mainly focus on the high-level interpretation of the error bound here and refer our readers to the appendix (see supplementary material) for proof details.


\noindent \textbf{Notation.} Let $\mathcal{D}_S$\footnote{In this literature, the calligraphic $\mathcal{D}$ denotes data distribution, and italic \textit{D} denotes domain discriminator.} and $\mathcal{D}_T$ denote source and target distribution on input space $\mathcal{X}$ and a ground-truth labeling function $g:\mathcal{X}\to\{0, 1\}$. A \textit{hypothesis} is a function $h:\mathcal{X}\to\{0,1\}$ with the \textit{error} w.r.t the ground-truth labeling function $g$: $\epsilon_S(h, g)\defeq \mathbb{E}_{\mathbf{x}\sim\mathcal{D}_S}[|h(\mathbf{x}) - g(\mathbf{x})|]$. We denote the risk and empirical risk of hypothesis $h$ on $\mathcal{D}_S$ as $\epsilon_S(h)$ and $\widehat{\epsilon}_S(h)$. Similarly, the risk and empirical risk of $h$ on $\mathcal{D}_T$ are denoted as $\epsilon_T(h)$ and $\widehat{\epsilon}_T(h)$. The $\mathcal{H}$-divergence between two distributions $\mathcal{D}$ and $\mathcal{D}'$ is defined as: $d_{\mathcal{H}}(\mathcal{D}, \mathcal{D}')\defeq 2\sup_{A\in\mathcal{A}_{\mathcal{H}}}|\Pr_{\mathcal{D}}(A) - \Pr_{\mathcal{D}'}(A)|$, where $\mathcal{H}$ is a hypothesis class for input space $\mathcal{X}$, and $\mathcal{A}_{\mathcal{H}}$ denotes the collection of subsets of $\mathcal{X}$ that are the support of some hypothesis in $\mathcal{H}$. 

\rebuttal{The symmetric difference space $\mathcal{H}\Delta\mathcal{H}$ is defined as: $\mathcal{H}\Delta\mathcal{H}:=\{h(\mathbf{x})\oplus h'(\mathbf{x}))|h,h'\in\mathcal{H}\}$, ($\oplus$: the XOR operation).} We denote the optimal hypothesis that achieves the minimum risk on the source and the target as $h^*:=\arg\min_{h\in\mathcal{H}}\epsilon_S(h)+\epsilon_T(h)$ and the error of $h^*$ as $\lambda:=\epsilon_S(h^*)+\epsilon_T(h^*)$.~\citet{blitzer2007learning} prove the following error bound on the target domain.
\theoremstyle{definition}
\begin{theorem}
\label{thm:shai}
Let $\mathcal{H}$ be a hypothesis space of $VC$-dimension $d$ and $\widehat{\mathcal{D}}_S$, $\widehat{\mathcal{D}}_T$ be the empirical distribution induced by samples of size $m$ drawn from $\mathcal{D}_S$ and $\mathcal{D}_T$. Then with probability at least $1-\delta$ over the choice of samples, for each $h\in\mathcal{H}$, 
\begin{equation}
\epsilon_T(h)\leq\widehat{\epsilon}_S(h) +\frac{1}{2} \widehat{d}_{\mathcal{H}\Delta\mathcal{H}}(\widehat{\mathcal{D}}_S, \widehat{\mathcal{D}}_T) + 4\sqrt{\frac{2d\log(2m) + \log(4/\delta)}{m}} + \lambda
\label{equ:singlebound}
\end{equation}
\end{theorem}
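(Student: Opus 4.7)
The plan is to derive a population-level inequality first and then pass to the empirical bound via standard Vapnik--Chervonenkis concentration. The starting move is to exploit that the error $\epsilon(\cdot,\cdot)$ obeys a triangle inequality (since $|h-g|$ is a pseudo-metric). Fix $h\in\mathcal{H}$ and the optimal joint hypothesis $h^*$ with $\lambda = \epsilon_S(h^*)+\epsilon_T(h^*)$. I would write
\begin{equation*}
\epsilon_T(h)\;\leq\;\epsilon_T(h^*)+\epsilon_T(h,h^*)\;\leq\;\epsilon_T(h^*)+\epsilon_S(h,h^*)+\bigl|\epsilon_T(h,h^*)-\epsilon_S(h,h^*)\bigr|,
\end{equation*}
and then apply the triangle inequality once more to get $\epsilon_S(h,h^*)\leq \epsilon_S(h)+\epsilon_S(h^*)$.

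The next step is to bound the discrepancy term by the $\mathcal{H}\Delta\mathcal{H}$-divergence. Since $h\oplus h'\in\mathcal{H}\Delta\mathcal{H}$ for any $h,h'\in\mathcal{H}$, the quantity $\bigl|\epsilon_T(h,h^*)-\epsilon_S(h,h^*)\bigr|$ is exactly the difference between the probabilities that a member of $\mathcal{H}\Delta\mathcal{H}$ equals $1$ under $\mathcal{D}_T$ versus $\mathcal{D}_S$, which is at most $\tfrac{1}{2}d_{\mathcal{H}\Delta\mathcal{H}}(\mathcal{D}_S,\mathcal{D}_T)$ by the definition of $d_{\mathcal H}$. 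Combining the displays gives the population bound
\begin{equation*}
\epsilon_T(h)\;\leq\;\epsilon_S(h)+\tfrac{1}{2}\,d_{\mathcal{H}\Delta\mathcal{H}}(\mathcal{D}_S,\mathcal{D}_T)+\lambda.
\end{equation*}

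To obtain the stated empirical version, I would invoke two standard VC uniform-convergence inequalities and union-bound them. First, the classical Vapnik--Chervonenkis bound gives, with probability at least $1-\delta/2$, $\epsilon_S(h)\leq \widehat{\epsilon}_S(h)+2\sqrt{(2d\log(2m)+\log(4/\delta))/m}$ uniformly over $h\in\mathcal{H}$. Second, since $\mathcal{H}\Delta\mathcal{H}$ has VC-dimension at most $2d$ and its divergence is computed by a single binary classifier on the union of the two samples, a symmetrization argument yields $d_{\mathcal{H}\Delta\mathcal{H}}(\mathcal{D}_S,\mathcal{D}_T)\leq \widehat{d}_{\mathcal{H}\Delta\mathcal{H}}(\widehat{\mathcal{D}}_S,\widehat{\mathcal{D}}_T)+4\sqrt{(2d\log(2m)+\log(4/\delta))/m}$ with probability at least $1-\delta/2$. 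Plugging both into the population inequality and simplifying the constants gives the stated bound.

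The main obstacle is the second concentration step: one has to show carefully that estimating $d_{\mathcal{H}\Delta\mathcal{H}}$ from finite samples is a \emph{uniform} convergence problem over the class $\mathcal{H}\Delta\mathcal{H}$, account correctly for its VC dimension being at most $2d$ (not $d$), and verify that the symmetrization/Sauer--Shelah argument produces exactly the factor of $4$ in front of the square root after absorbing the $\tfrac12$ from the inequality. Once the constants are tracked, the remaining algebra is routine. Details, including the explicit symmetrization and the treatment of the empirical divergence as a binary classification error, are deferred to the appendix.
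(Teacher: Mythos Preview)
The paper does not actually prove this statement: Theorem~\ref{thm:shai} is quoted as background from \citet{blitzer2007learning} (building on Ben-David et al.), and only Theorem~\ref{thm:error_bound} is proved in the appendix, by invoking Theorem~\ref{thm:shai} as a black box on the $\alpha$-mixture source and then decomposing the divergence and $\lambda$ terms via convexity. So there is no in-paper proof to compare against.

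That said, your proposal is the standard and correct derivation from the original source. The population inequality via the triangle inequality on $\epsilon(\cdot,\cdot)$ and the identification of $|\epsilon_T(h,h^*)-\epsilon_S(h,h^*)|$ with (half of) the $\mathcal{H}\Delta\mathcal{H}$-divergence are exactly the argument in Ben-David et al.; the two uniform-convergence steps and the union bound with $\delta/2$ each are likewise the standard way the constant $4$ arises. Your caveat about tracking the VC dimension of $\mathcal{H}\Delta\mathcal{H}$ is well placed: in the original proof this is handled via the growth-function bound $\Pi_{\mathcal{H}\Delta\mathcal{H}}(m)\le \Pi_{\mathcal{H}}(m)^2$ rather than a direct VC-dimension claim, which is what ultimately yields the $2d\log(2m)$ term inside the root. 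With that small refinement in the write-up, the argument goes through as you describe.
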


Let ${\mathcal{D}}_S$=$\{\mathcal{D}_{S_i}\}_{i=1}^{N}$, and ${\mathcal{D}}_T = \{\mathbf{x}_j^t\}_{j=1}^{n_t}$ be $N$ source domains and the target domain in a UFDA system, where $\mathcal{D}_{S_i} = \{(\mathbf{x}_j^s,\mathbf{y}^s_j)\}_{j=1}^{n_i}$. In federated domain adaptation system, $\mathcal{D}_S$ is distributed on $N$ nodes and the data are not shareable with each other in the training process. The classical domain adaptation algorithms aim to minimize the target risk $\epsilon_T(h):=\Pr_{(\mathbf{x},y)\sim\mathcal{D}_T}[h(\mathbf{x})\ne y]$. However, in a UFDA system,
one model cannot directly get access to data stored on different nodes for security and privacy reasons. To address this issue, we propose to learn separate models for each distributed source domain $h_{S}=\{h_{S_i}\}_{i=1}^N$. The target \textit{hypothesis} $h_T$ is the aggregation of the parameters of $h_S$, \textit{i.e.} $h_T:=\sum_{i=1}^{N}{\alpha_i}{h_{S_i}}$,  $\forall \alpha \in \mathbb{R}_{+}^{N}$,  $\sum_{i\in [N]}\alpha_i=1$. We can then derive the following error bound:   
\begin{theorem} (Weighted error bound for federated domain adaptation).
\label{thm:error_bound}
Let $\mathcal{H}$ be a hypothesis class with VC-dimension d and $\{\widehat{\mathcal{D}}_{S_i}\}_{i=1}^N$, $\widehat{\mathcal{D}}_T$ be empirical distributions induced by a sample of size m from each source domain and target domain in a federated learning system, respectively. Then, $\forall \alpha \in \mathbb{R}_{+}^{N}$, $\sum_{i=1}^{N}{\alpha_i}=1$, with probability at least $1-\delta$ over the choice of samples, for each $h\in\mathcal{H}$,
\begin{equation}
    \label{equ:federated_bound}
    \epsilon_T(h_T)\leq 
    \underbrace{\widehat{\epsilon}_{\Tilde{S}}(\sum_{i\in[N]}\alpha_i h_{S_i})}_{\text{error on source }}+
    \sum_{i\in[N]}\alpha_i\bigg(\frac{1}{2} \underbrace{\widehat{d}_{\mathcal{H}\Delta\mathcal{H}}(\widehat{\mathcal{D}}_{S_i},\widehat{\mathcal{D}}_T)}_{(\mathcal{D}_{S_i}, \mathcal{D}_T) \text{ divergence}}+\lambda_i\bigg) + \underbrace{4\sqrt{\frac{2d\log(2Nm) + \log(4/\delta)}{Nm}}}_{\text{VC-Dimension Constraint}}
\end{equation}
where $\lambda_i$ is the risk of the optimal hypothesis on the mixture of $\mathcal{D}_{S_i}$ and $T$, and $\Tilde{S}$ is the mixture of source samples with size $Nm$. $\widehat{d}_{\mathcal{H}\Delta\mathcal{H}}(\widehat{\mathcal{D}}_{S_i},\widehat{\mathcal{D}}_T)$ denotes the divergence between domain $S_i$ and $T$.
\end{theorem}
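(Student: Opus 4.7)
The plan is to reduce the multi-source bound of Theorem~\ref{thm:error_bound} to the single-source bound of Theorem~\ref{thm:shai} by viewing the federated system as a single $\alpha$-weighted source mixture paired with the target, and then decomposing the resulting divergence and joint-error terms back into per-source quantities.

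First I would define the mixture distribution $\widetilde{\mathcal{D}}_S \defeq \sum_{i=1}^N \alpha_i \mathcal{D}_{S_i}$, whose empirical version $\widehat{\widetilde{\mathcal{D}}}_S$ is obtained by pooling the $N$ local samples of size $m$, yielding $Nm$ effective samples. Since $h_T=\sum_i \alpha_i h_{S_i}$ is a convex combination of hypotheses in $\mathcal{H}$, applying Theorem~\ref{thm:shai} to the pair $(\widetilde{\mathcal{D}}_S,\mathcal{D}_T)$ with hypothesis $h_T$ and pooled sample size $Nm$ immediately delivers the empirical source-risk term $\widehat{\epsilon}_{\widetilde{S}}(\sum_i \alpha_i h_{S_i})$ and the VC-dimension term $4\sqrt{(2d\log(2Nm)+\log(4/\delta))/(Nm)}$ exactly as stated.

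Second I would split the pooled divergence into per-source pieces. For any measurable $A\in\mathcal{A}_{\mathcal{H}\Delta\mathcal{H}}$, the identity $\sum_i \alpha_i = 1$ combined with the triangle inequality gives $|\Pr_{\widetilde{S}}(A)-\Pr_T(A)| = |\sum_i \alpha_i (\Pr_{S_i}(A)-\Pr_T(A))| \le \sum_i \alpha_i |\Pr_{S_i}(A)-\Pr_T(A)|$; pushing the supremum inside the convex combination (valid because the coefficients are nonnegative) yields $\widehat{d}_{\mathcal{H}\Delta\mathcal{H}}(\widehat{\widetilde{\mathcal{D}}}_S,\widehat{\mathcal{D}}_T) \le \sum_i \alpha_i \widehat{d}_{\mathcal{H}\Delta\mathcal{H}}(\widehat{\mathcal{D}}_{S_i},\widehat{\mathcal{D}}_T)$, matching the weighted divergence term in the theorem. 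An analogous linearity argument $\epsilon_{\widetilde{S}}(h) = \sum_i \alpha_i \epsilon_{S_i}(h)$, combined with a coherent choice of near-optimal hypothesis across sources, should produce the weighted ideal-joint-error contribution $\sum_i \alpha_i \lambda_i$.

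The main obstacle will be exactly this last reduction: $\widetilde{\lambda} \defeq \min_h[\epsilon_{\widetilde{S}}(h)+\epsilon_T(h)]$ is a \emph{single}-hypothesis minimum whereas $\sum_i \alpha_i \lambda_i$ allows a different minimizer per source, so Jensen's inequality points the wrong way. My plan is to close this gap by picking, for each $i$, a minimizer $h_i^*$ of $\epsilon_{S_i}(h)+\epsilon_T(h)$ and invoking the $\mathcal{H}\Delta\mathcal{H}$ triangle inequality to charge the cross-source errors $\epsilon_{S_k}(h_i^*)$ back to $\lambda_k$ plus the $(\mathcal{D}_{S_k},\mathcal{D}_T)$-divergence terms that already appear in the bound, absorbing residual slack into the constants of the divergence contribution. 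A cleaner fallback, if this direct charging is cumbersome, is to apply Theorem~\ref{thm:shai} separately to each pair $(\mathcal{D}_{S_i},\mathcal{D}_T)$ with confidence $\delta/N$, combine the resulting bounds through the convexity inequality $\epsilon_T(h_T) \le \sum_i \alpha_i \epsilon_T(h_{S_i})$ (which follows from $|h_T(x)-g(x)|\le\sum_i\alpha_i|h_{S_i}(x)-g(x)|$), and finish with a union bound; this route trades a mildly weaker concentration rate for a direct derivation of the stated form.
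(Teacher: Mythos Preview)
Your approach is essentially the paper's: define the $\alpha$-mixture $\widetilde{\mathcal D}_S=\sum_i\alpha_i\mathcal D_{S_i}$, apply Theorem~\ref{thm:shai} to the pair $(\widetilde{\mathcal D}_S,\mathcal D_T)$ with pooled sample size $Nm$, then split $\widehat d_{\mathcal H\Delta\mathcal H}(\widehat{\widetilde{\mathcal D}}_S,\widehat{\mathcal D}_T)$ into the weighted per-source divergences via the triangle inequality and $\sup\sum\le\sum\sup$. This matches the paper step for step, including the form of the VC term.

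The one place you diverge is the $\lambda$ term, and you are right to flag it. The paper simply asserts ``with the triangle inequality property, we can derive $\lambda_\alpha\le\sum_{i}\alpha_i\lambda_i$'' and moves on. Your observation that the minimum-over-a-single-$h$ versus separate-minimizers issue makes this inequality point the \emph{wrong} way is correct: for any fixed $h$, $\sum_i\alpha_i(\epsilon_{S_i}(h)+\epsilon_T(h))\ge\sum_i\alpha_i\lambda_i$, hence $\lambda_\alpha\ge\sum_i\alpha_i\lambda_i$, not $\le$. So the paper's proof glosses over exactly the obstacle you identified; your proposed ``charging'' argument or the per-source fallback are both more honest attempts, though note the fallback would yield $m$ (not $Nm$) in the denominator and a $\log(4N/\delta)$ factor, so it does not reproduce the stated VC term verbatim. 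In short: your plan coincides with the paper's, and the point you worried about is precisely where the paper's argument is thin.
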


\noindent \textbf{Comparison with Existing Bounds} The bound in (\ref{equ:federated_bound}) is extended from (\ref{equ:singlebound}) and they are equivalent if only one source domain exists ($N=1$).~\citet{Mansour_nips2018} provide a generalization bound for multiple-source domain adaptation, assuming that the target domain is a mixture of the $N$ source domains. In contrast, in our error bound (\ref{equ:federated_bound}), the target domain is assumed to be an novel domain, resulting in a bound involving $\mathcal{H}\Delta\mathcal{H}$ discrepancy~\citep{ben2010theory} and the VC-dimensional constraint~\citep{vapnik1998statistical}.~\citet{blitzer2007learning} propose a generalization bound for semi-supervised multi-source domain adaptation, assuming that partial target labels are available. Our generalization bound is devised for unsupervised learning. \citet{zhao2018nips_multisource} introduce classification and regression error bounds for multi-source domain adaptation. However, these error bounds assume that the multiple source and target domains exactly share the same hypothesis. In contrast, our error bound involves multiple hypotheses. 
\section{Federated Adversarial Domain Adaptation}

The error bound in Theorem (\ref{thm:error_bound}) demonstrates the importance of the weight $\alpha$ and the discrepancy ${d}_{\mathcal{H}\Delta\mathcal{H}}({\mathcal{D}}_{S},{\mathcal{D}}_T)$ in unsupervised federated domain adaptation. Inspired by this, we propose dynamic attention model to 
learn the
weight $\alpha$ and federated adversarial alignment to minimize the discrepancy between the source and target domains, as shown in Figure~\ref{fig_overview}. In addition, we leverage representation disentanglement to extract \textit{domain-invariant} representations to further enhance  knowledge transfer. 

\noindent \textbf{Dynamic Attention} In a federated domain adaptation system, the models on different nodes have different convergence rates. In addition, the domain shifts between the source domains and target domain are different, leading to a phenomenon where some nodes may have no contribution  or even \textit{negative transfer}~\citep{pan2010survey} to the target domain. To address this issue, we propose dynamic attention, which is a mask on the gradients from source domains. The philosophy behind the dynamic attention is to increase the weight of those nodes whose gradients are beneficial to the target domain and limit the weight of those whose gradients are detrimental to the target domain. Specifically, we leverage the \textit{gap statistics}~\citep{inertia} to evaluate how well the target features $f^t$ can be clustered with unsupervised clustering algorithms (K-Means). Assuming we have $k$ clusters, the \textit{gap statistics} are computed as:
\begin{equation}
    \label{inertia}
    I = \sum_{r=1}^{k}\frac{1}{2n_r}\sum_{i,j\in C_r} ||f_i^t - f_j^t||_2
\end{equation} 
where we have clusters $C_1,C_2,...,C_k$, with $C_r$ denoting the indices of observations in cluster $r$, and $n_r$=$|C_r|$. Intuitively, a smaller \textit{gap statistics} value indicates the feature distribution has smaller intra-class variance. We measure the contribution of each source domain by the \textit{gap statistics gain} between two consecutive iterations: $I_i^{gain}=I_i^{p-1}-I_i^p$ ($p$ indicating training step), denoting how much the clusters can be improved before and after the target model is updated with the $i$-th source model's gradient. The mask on the gradients from source domains is defined as $Softmax$($I_1^{gain}$,$I_2^{gain}$,...,$I_N^{gain}$).

\noindent \textbf{Federated Adversarial Alignment} The performance of machine learning models degrades rapidly with the presence of domain discrepancy~\citep{long2015}. To address this issue, existing work~\citep{hoffman2017cycada,Tzeng_2015_ICCV} proposes to minimize the discrepancy with an adversarial training process. For example, \citet{Tzeng_2015_ICCV} proposes the domain confusion objective, under which the feature extractor is trained with a cross-entropy loss against a uniform distribution. However, these models require access to the source and target data simultaneously, which is prohibitive in UFDA. 
In the federated setting, we have multiple source domains and the data are locally stored in a privacy-preserving manner, which means we cannot train a single model which has access to the source domain and target domain simultaneously. 
To address this issue, we propose federated adversarial alignment that divides optimization into two independent steps, a domain-specific local feature extractor and a global discriminator. Specifically, (1) for each domain, we train a local feature extractor, $G_i$ for $\mathcal{D}_i$ and $G_t$ for $\mathcal{D}_t$, (2) for each ($\mathcal{D}_i$, $\mathcal{D}_t$) source-target domain pair, we train an adversarial {\rebuttal{domain identifier}} \rebuttal{$DI$ to align the distributions in an adversarial manner: we first train $DI$ to identify which domain are the features come from, then we train the generator ($G_i$, $G_t$) to confuse the $DI$}. Note that $D$ only gets access to the output vectors of $G_i$ and $G_t$, without violating the UFDA setting.  
Given the $i$-th source domain data $\mathbf{X}^{S_i}$, target data $\mathbf{X}^T$, the objective for $DI_i$ is defined as follows: 
\begin{equation}
    \label{adversarial_d}
    \rebuttal{
    \underset{\Theta^{DI_i}}{\mathcal{L}_{{adv}_{DI_i}}}(\mathbf{X}^{S_i}, \mathbf{X}^T, G_i, G_t)=- \mathbb{E}_{\mathbf{x}^{s_i} \sim \mathbf{X}^{s_i}}\left[\log DI_i(G_i(\mathbf{x}^{s_i}))]-\mathbb{E}_{\mathbf{x}^t \sim \mathbf{X}^t} [\log(1 - DI_i(G_t(\mathbf{x}^{t})))\right]}.
\end{equation}
In the second step, $\mathcal{L}_{\text{adv}_{D}}$ remains unchanged, but $\mathcal{L}_{\text{adv}_{G}}$ is updated with the following objective:
\begin{equation}
    \label{adversarial_g}
    \rebuttal{
    \underset{\Theta^{G_i},\Theta^{G_t}}{\mathcal{L}_{{adv}_G}} (\mathbf{X}^{S_i}, \mathbf{X}^T, DI_i) = -
    \mathbb{E}_{\mathbf{x}^{s_i} \sim \mathbf{X}^{s_i}}[\log DI_i(G_{i}(\mathbf{x}^{s_i}))]-
    \mathbb{E}_{\mathbf{x}^t \sim \mathbf{X}^t}[\log DI_i(G_t(\mathbf{x}^t))]}
\end{equation}
\begin{algorithm}[t]			
	\caption{Federated Adversarial Domain Adaptation} \label{alg_FADA}
	\begin{small}
		\hspace*{0.02in}{\bf Input:} $N$ source domains ${\mathcal{D}}_S$=$\{\mathcal{D}_{S_i}\}_{i=1}^{N}$; a target domain ${\mathcal{D}}_t = \{\mathbf{x}_j^t\}_{j=1}^{n_t}$; \rebuttal{$N$ feature extractors $\{\Theta^{G_1},\Theta^{G_2},...\Theta^{G_N}\}$, $N$ disentanglers $\{\Theta^{D_1},\Theta^{D_2},...\Theta^{D_N}\}$, $N$ classifiers $\{\Theta^{C_1},\Theta^{C_2},...\Theta^{C_N}\}$,
		$N$ class identifiers $\{\Theta^{CI_1},\Theta^{CI_2},...\Theta^{CI_N}\}$, $N$ mutual information estimators $\{\Theta^{M_1},\Theta^{M_2},...\Theta^{M_N}\}$ trained on source domains. Target feature extractor $\Theta^{G_t}$, classifier $\Theta^{C_t}$. $N$ domain identifiers $\{\Theta^{DI_1},\Theta^{DI_2},...,\Theta^{DI_N}\}$} 
		
		\hspace*{0.02in}{\bf Output:} \rebuttal{well-trained target feature extractor $\hat{\Theta}^{G_t}$, target classifier $\hat{\Theta}^{C_t}$} .\\
		\hspace*{0.02in}{Model Initialization} .
		\begin{algorithmic}[1]

			\While{not converged}
			\For i={1:N}
			\State Sample mini-batch from from $\{(\mathbf{x}_i^s,y^s_i)\}_{i=1}^{n_s}$ and $\{\mathbf{x}_j^t\}_{j=1}^{n_t}$;
            \State \rebuttal{Compute gradient with cross-entropy classification loss, update $\Theta^{G_i}$, $\Theta^{C_i}$}.

            
			\State \textbf{Domain Alignment:}
			\State \rebuttal{Update $\Theta^{DI_i}$, $\{\Theta^{G_i}, \Theta^{G_t} \}$ with Eq.~\ref{adversarial_d} and Eq.~\ref{adversarial_g} respectively to align the domain distribution;}
			\State \textbf{Domain Disentangle}:
			\State \rebuttal{update $\Theta^{G_i},\Theta^{D_i},\Theta^{C_i},\Theta^{CI_i}$ with Eq.~\ref{equ_cross_entropy}}
            \State \rebuttal{update $\Theta^{D_i}$ and $\{\Theta^{G_i}\}$ with Eq.~\ref{equ_entropy_loss}}	
            \State \textbf{Mutual Information Minimization}:
            \State \rebuttal{Calculate mutual information between the disentangled feature pair ($f_{di}$, $f_{ds}$) with $M_i$;}
			\State \rebuttal{Update $\Theta^{D_i}$, $\Theta^{M_i}$ by Eq.\ref{equ_mutual_information};}
            \EndFor
			\State \textbf{Dynamic weight:}
			\State Calculate dynamic weight by Eq.~\ref{inertia}
			\State \rebuttal{Update $\Theta^{G_t},\Theta^{C_t}$ by aggregated $\{\Theta^{G_1},\Theta^{G_2}, ..., \Theta^{G_N}\}$, $\{\Theta^{C_1},\Theta^{C_2},...\Theta^{C_N}\}$ respectively with the computed dynamic weight;}
			\EndWhile \\
			\Return \rebuttal{$\Theta^{G_t}$, $\Theta^{C_t}$}
		\end{algorithmic}
	\end{small}
\end{algorithm}

\noindent\textbf{Representation Disentanglement} 
We employ \textit{adversarial disentanglement} to extract the \textit{domain-invariant} features. \rebuttal{The high-level intuition is to disentangle the features extracted by ($G_i$, $G_t$) into domain-invariant and domain-specific features. As shown in Figure~\ref{figure1_architecture}, the disentangler $D_i$ separates the extracted features into two branches.} Specifically, we first train the \rebuttal{$K$-way classifier $C_i$ and $K$-way class identifier $CI_i$ to correctly predict the labels with a cross-entropy loss, based on $f_{di}$ and $f_{ds}$ features, respectively}. The objective is:
\begin{equation}
\footnotesize
    \rebuttal{\underset{\Theta^{G_i},\Theta^{D_i},\Theta^{C_i},\Theta^{CI_i}}{\mathcal{L}_{cross-entropy}} = -\mathbb{E}_{(\mathbf{x}^{s_i},\mathbf{y}^{s_i})\sim\widehat{\mathcal{D}}_{s_i}} \sum_{k=1}^{K}\mathds{1} [k=\mathbf{y}^{s_i}]log(C_i(f_{di}))
    -\mathbb{E}_{(\mathbf{x}^{s_i},\mathbf{y}^{s_i})\sim\widehat{\mathcal{D}}_{s_i}} \sum_{k=1}^{K}\mathds{1} [k=\mathbf{y}^{s_i}]log(CI_i(f_{ds}))}
    \label{equ_cross_entropy}
\end{equation}
where \rebuttal{$f_{di}=D_i(G_i(\mathbf{x^{s_i}}))$, $f_{ds}=D_i(G_i(\mathbf{x}^{s_i}))$} denote the \textit{domain-invariant} and \textit{domain-specific} features respectively. In the next step, we freeze the \rebuttal{class identifier $CI_i$} and only \rebuttal{train the feature disentangler to confuse the class identifier $CI_i$ by generating the \textit{domain-specific} features $f_{ds}$, as shown in Figure~\ref{fig_overview}}. This can be achieved by minimizing the negative entropy loss of the predicted class distribution. The objective is as follows:
\begin{equation}
\rebuttal{\underset{\Theta^{D_i},\Theta^{G_i}}{\mathcal{L}_{ent}} = - \frac{1}{N_{s_i}} \sum_{j=1}^{N_{s_i}} \log CI_i(f^j_{ds}) = - \frac{1}{N_{s_i}} \sum_{j=1}^{N_{s_i}} \log CI_i(D_i(G_i(\mathbf{x}^{s_i})))} 
\label{equ_entropy_loss}
\end{equation}
\rebuttal{Feature disentanglement facilitates the knowledge transfer by reserving $f_{di}$ and dispelling $f_{ds}$}. To enhance the disentanglement, we minimize the mutual information between \textit{domain-invariant} features and \textit{domain-specific} features, following \citet{Peng2019DomainAL}. Specifically, the mutual information is defined as $I({f}_{di}; {f}_{ds}) = \int_{\mathcal{P} \times \mathcal{Q}} \log{\frac{d\PJ{P}{Q}}{d\PI{P}{Q}}} d\PJ{P}{Q}$, where $\PJ{P}{Q}$ is the joint probability distribution of (${f}_{di}, {f}_{ds}$), and $\mathbb{P}_{\mathcal{P}} =
\int_{\mathcal{Q}} d\PJ{P}{Q}$, $\mathbb{P}_{\mathcal{Q}} = \int_{\mathcal{Q}} d\PJ{P}{Q}$ are the marginals. Despite being a pivotal measure across different distributions, the mutual information is only tractable for discrete variables, for a limited family of problems where the probability distributions are unknown~\citep{mine}. Following~\cite{Peng2019DomainAL}, we adopt the Mutual Information Neural Estimator (MINE)~\citep{mine} to estimate the mutual information by leveraging a neural network $T_{\theta}$: 
$\widehat{I(\mathcal{P};\mathcal{Q})}_n = \sup_{\theta \in \Theta} \mathbb{E}_{\mathbb{P}^{(n)}_{\mathcal{P}\mathcal{Q}}}[T_\theta] - \log(\mathbb{E}_{\mathbb{P}^{(n)}_{\mathcal{P}} \otimes \widehat{\mathbb{P}}^{(n)}_{\mathcal{Q}}}[e^{T_\theta}])$. Practically, MINE can be calculated as $I(\mathcal{P};\mathcal{Q})=\int\int{\mathbb{P}^{n}_{\mathcal{P}\mathcal{Q}}(p,q)\text{ }T(p,q,\theta)}$  -  $\log (\int\int\mathbb{P}^{n}_{\mathcal{P}}(p)\mathbb{P}^n_{\mathcal{Q}}(q)e^{T(p,q,\theta)})$. To avoid computing the integrals, we leverage Monte-Carlo integration to calculate the estimation:
\begin{equation}
I(\mathcal{P},\mathcal{Q})=\frac{1}{n}\sum^{n}_{i=1}T(p,q,\theta)-\log(\frac{1}{n}\sum_{i=1}^{n}e^{T(p,q',\theta)})
\label{equ_mutual_information}
\end{equation}
\rebuttal{where $(p,q)$ are sampled from the joint distribution, $q'$ is sampled from the marginal distribution, and $T(p,q,\theta)$ is the neural network parameteralized by $\theta$ to estimate the mutual information between $\mathcal{P}$ and $\mathcal{Q}$, we refer the reader to MINE~\citep{mine} for more details}. The \textit{domain-invariant} and \textit{domain-specific} features are forwarded to a reconstructor with a L2 loss to reconstruct the original features, aming to keep the representation integrity, as shown in Figure~\ref{figure1_architecture}. 

\vspace{0.1cm}
\noindent \textbf{Optimization} Our model is trained in an end-to-end fashion. We train federated alignment and representation disentanglement component with Stochastic Gradient Descent~\citep{SGD}. The federated adversarial alignment loss and representation disentanglement loss are \rebuttal{minimized} together with the task loss. The detailed training procedure is presented in Algorithm~\ref{alg_FADA}.

\section{Experiments}
\begin{figure*}[t]
    \centering
    \vspace{-0.8cm}
    \includegraphics[width=\linewidth]{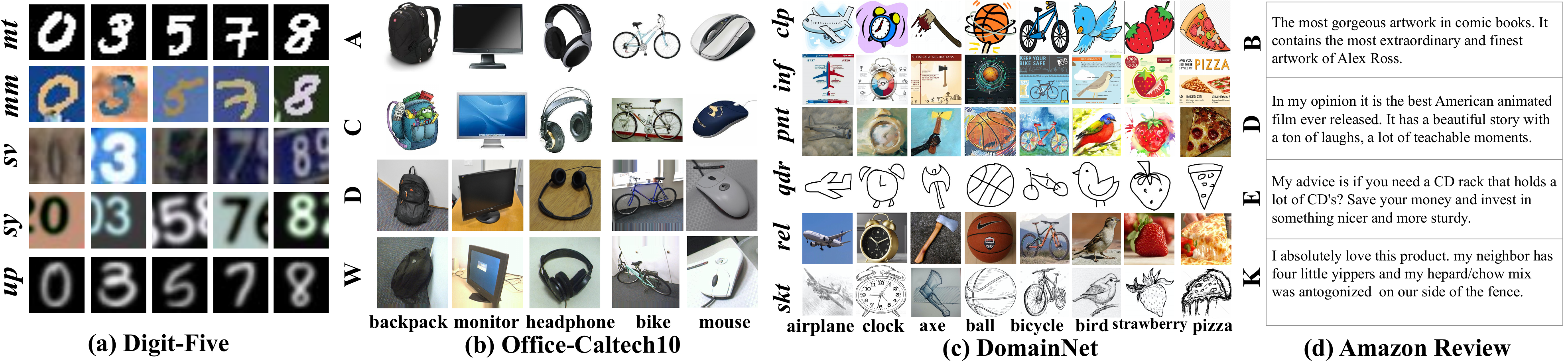}
    \vspace{-0.4cm}
    \caption{\footnotesize We demonstrate the effectiveness of FADA on four datasets: (1) ``Digit-Five'', which includes MNIST (\textit{mt}), MNIST-M (\textit{mm}), SVHN (\textit{sv}), Synthetic (\textit{syn}), and USPS (\textit{up}). (2) Office-Caltech10 dataset, which contains \textit{Amazon} (\textbf{A}), \textit{Caltech} (\textbf{C}), \textit{DSLR} (\textbf{D}), and \textit{Webcam} (\textbf{W}). (3) DomainNet dataset, which includes: \textit{clipart} (\textit{clp}), \textit{infograph} (\textit{inf}), \textit{painting} (\textit{pnt}), \textit{quickdraw} (\textit{qdr}), \textit{real} (\textit{rel}), and \textit{sktech} (\textit{skt}). (4) Amazon Review dataset, which contains review for \textit{Books} (\textbf{B}), \textit{DVDs} (\textbf{D}), \textit{Electronics} (\textbf{E}), and \textit{Kitchen \& housewares} (\textbf{K}).} 
    \vspace{-0.5cm}
    \label{fig_dataset_overview}
\end{figure*}
We test our model on the following tasks: digit classification (\textit{Digit-Five}), object recognition (\textit{Office-Caltech10}~\citep{gong2012geodesic}, \textit{DomainNet}~\citep{lsdac}) and sentiment analysis (\textit{Amazon Review} dataset~\citep{ref:blitzer}). Figure~\ref{fig_dataset_overview} shows some data samples and Table~\ref{tab:dataset_detail_num} (see supplementary material) shows the number of data per domain we used in our experiments. We perform our experiments on a 10 Titan-Xp GPU cluster and simulate the federated system on a single machine (as the data communication is not the main focus of this paper). Our model is implemented with PyTorch. We repeat every experiment 10 times on the \textit{Digit-Five} and \textit{Amazon Review} datasets, and 5 times on the \textit{Office-Caltech10} and \textit{DomainNet}~\citep{lsdac} datasets, reporting the mean and standard derivation of accuracy. To better explore the effectiveness of different components of our model, we propose three different ablations, \textit{i.e.} model \textbf{\RomanNumeralCaps{1}}: with \textit{dynamic attention}; model \textbf{\RomanNumeralCaps{2}}: \textbf{\RomanNumeralCaps{1}} + \textit{adversarial alignment}; and model \textbf{\RomanNumeralCaps{3}}: \textbf{\RomanNumeralCaps{2}} + \textit{representation disentanglement}.
\subsection{Experiments on Digit Recognition}
\noindent \textbf{Digit-Five} This dataset is a collection of five benchmarks for digit recognition, namely \textit{MNIST}~\citep{lecun1998gradient}, \textit{Synthetic Digits}~\citep{DANN}, \textit{MNIST-M}~\citep{DANN}, \textit{SVHN}, and \textit{USPS}. In our experiments, we take turns setting one domain as the target domain and the rest as the distributed source domains, leading to five transfer tasks. The detailed architecture of our model can be found in Table~\ref{tab:digit_arch} (see supplementary material).

Since many DA models~\citep{MCD_2018, SE, hoffman2017cycada} require access to data from different domains, it is infeasible to directly compare our model to these baselines. Instead, we compare our model to the following popular domain adaptation baselines: Domain Adversarial Neural Network (\textbf{DANN})~\citep{DANN}, Deep Adaptation Network (\textbf{DAN})~\citep{long2015}, Automatic DomaIn Alignment Layers (\textbf{AutoDIAL})~\citep{carlucci2017}, and Adaptive Batch Normalization (\textbf{AdaBN})~\cite{li2016revisiting}. Specifically, DANN minimizes the domain gap between source domain and target domain with a \textit{gradient reversal} layer. DAN applies multi-kernel MMD loss~\citep{gretton2007kernel} to align the source domain with the target domain in Reproducing Kernel Hilbert Space. AutoDIAL introduces domain alignment layer to deep models to match the source and target feature distributions to a reference one. AdaBN applies Batch Normalization layer~\citep{batch_normalization} to facilitate the knowledge transfer between the source and target domains. When conducting the baseline experiments, we use the code provided by the authors and modify the original settings to fit federated DA setting (\textit{i.e.} each domain has its own model), denoted by $f$-DAN and $f$-DANN. In addition, to demonstrate the difficulty of UFDA where accessing all source data with a single model is prohibative, we also perform the corresponding \textit{multi-source DA} experiments (shared source data).



\begin{table*}

\vspace{-0.5cm}
\setlength{\tabcolsep}{0.185em}
\centering
{
\begin{tabular}{c c c c c c  c c}
\Xhline{1pt} 
{Models} &
 {\scriptsize{\BV{mt,sv,sy,up$\rightarrow$mm}} } &  
 {\scriptsize{\BV{mm,sv,sy,up$\rightarrow$mt}} } & 
 {\scriptsize{\BV{mt,mm,sy,up$\rightarrow$sv}} }& 
 {\scriptsize{\BV{mt,mm,sv,up$\rightarrow$sy}} }&    
 {\scriptsize{\BV{mt,mm,sv,sy$\rightarrow$up}} } & 
{{{Avg}}}  \\ 

\hline

Source Only & 63.3$\pm$0.7 & 90.5$\pm$0.8 & 88.7$\pm$0.8 & 63.5$\pm$0.9 & 82.4$\pm$0.6&\BU{77.7} \\
DAN  & 63.7$\pm$0.7 & 96.3$\pm$0.5 & 94.2$\pm$0.8 & 62.4$\pm$0.7 & 85.4$\pm$0.7 & \BU{80.4} \\
DANN& 71.3$\pm$0.5 & 97.6$\pm$0.7 & 92.3$\pm$0.8 & 63.4$\pm$0.7 & 85.3$\pm$0.8 & \BU{82.1} \\
\Xhline{0.7pt} 
Source Only & 49.6$\pm$0.8 & 75.4$\pm$1.3 &  22.7$\pm$0.9 & 44.3$\pm$0.7 & 75.5$\pm$1.4 & 53.5  \\
AdaBN & 59.3$\pm$0.8 & 75.3$\pm$0.7 & 34.2$\pm$0.6  & 59.7$\pm$0.7 & 87.1$\pm$0.9& 61.3\\
AutoDIAL  & \underline{60.7}$\pm$1.6 & 76.8$\pm$0.9  & 32.4$\pm$0.5  & 58.7$\pm$1.2 & 90.3$\pm$0.9& 65.8\\
\textit{f}-DANN &  59.5$\pm$0.6 & 86.1$\pm$1.1 & 44.3 $\pm$0.6 & 53.4$\pm$0.9& 89.7$\pm$0.9 & 66.6 \\
\textit{f}-DAN & 57.5 $\pm$0.8 & 86.4 $\pm$0.7 & 45.3$\pm$0.7 & 58.4$\pm$0.7 & \underline{90.8} $\pm$1.1& 67.7 \\

\textbf{FADA}+\textit{attention} (\textbf{\RomanNumeralCaps{1}})  & 44.2$\pm$0.7 & 90.5$\pm$0.8 &  27.8$\pm$0.5 & 55.6$\pm$0.8 & 88.3$\pm$1.2  & 61.3   \\

\textbf{FADA}+\textit{adversarial} (\textbf{\RomanNumeralCaps{2}})  & 58.2$\pm$0.8  & \textbf{92.5}$\pm$ 0.9 &  \underline{48.3}$\pm$0.6 & \underline{62.1}$\pm$0.5 & \underline{90.6}$\pm$1.1 & 70.3  \\ 
\textbf{FADA}+\textit{disentangle} (\textbf{\RomanNumeralCaps{3}}) & \textbf{62.5}$\pm$0.7 & \underline{91.4} $\pm$0.7 & \textbf{50.5} $\pm$0.3 & \textbf{71.8}$\pm$0.5 & \textbf{91.7}$\pm$1.0 &\textbf{73.6} \\

\Xhline{1.0pt}
\end{tabular}
} 
\caption{\small Accuracy (\%) on ``Digit-Five'' dataset with UFDA protocol. FADA achieves \textbf{73.6}\%, outperforming other baselines. We incrementally add each component t our model, aiming to study their effectiveness on the final results. (model \textbf{\RomanNumeralCaps{1}}: with \textit{dynamic attention}; model \textbf{\RomanNumeralCaps{2}}: \textbf{\RomanNumeralCaps{1}}+\textit{adversarial alignment}; model \textbf{\RomanNumeralCaps{3}}: \textbf{\RomanNumeralCaps{2}}+\textit{representation disentanglement}. {\BV{mt}}, {\BV{up}}, {\BV{sv}}, {\BV{sy}}, {\BV{mm}} are abbreviations for \textit{MNIST}, \textit{USPS}, \textit{SVHN}, \textit{Synthetic Digits}, \textit{MNIST-M}.) }
\label{table_digit_five_result}
\vspace{-0.4cm}

\end{table*}

\begin{figure*}[t]
    \begin{minipage}{\hsize}
      \centering
      \subfigure[\scriptsize Source Features ]
      {\includegraphics[width=0.235\hsize]{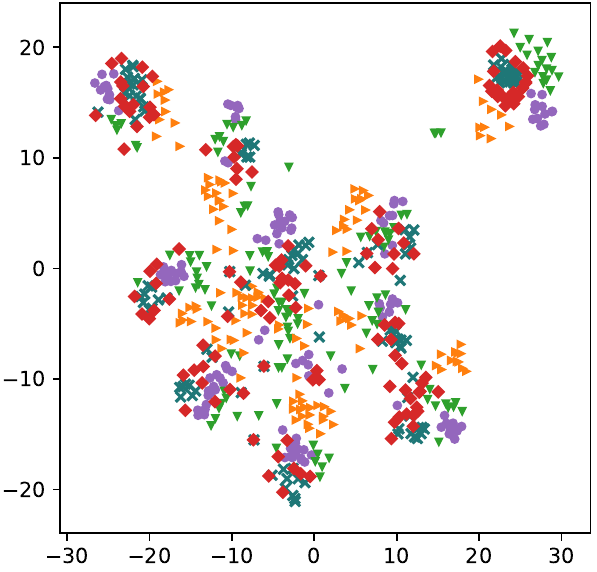}
      \label{tsne_source} }
     \centering
      \subfigure[\scriptsize f-DANN Features ]
      {\includegraphics[width=0.235\hsize]{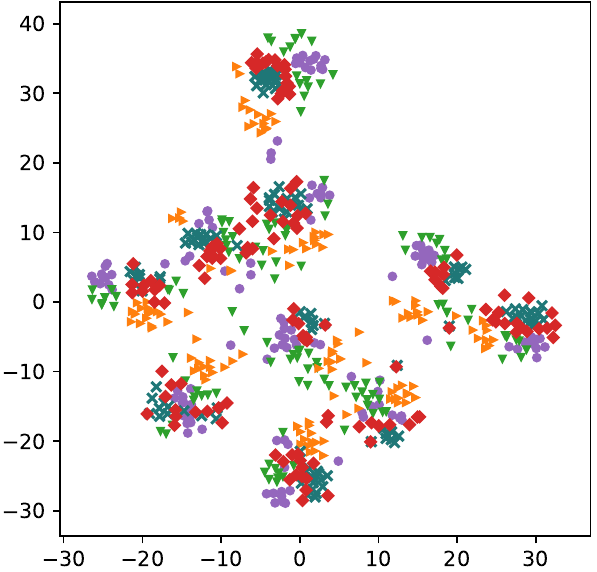}
      \label{tsne_fdann}}
      \centering
      \subfigure[\scriptsize f-DAN Features]
      {\includegraphics[width=0.235\hsize]{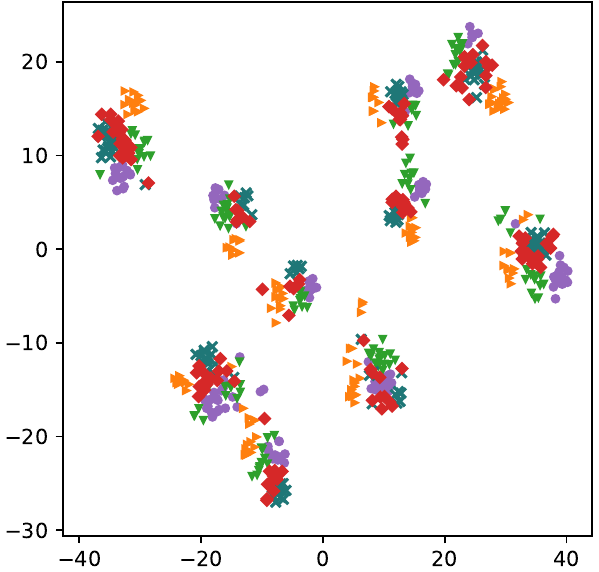} 
      \label{tsne_fdan}}
      \centering
      \subfigure[\scriptsize FADA Features ]
      { \includegraphics[width=0.235\hsize]{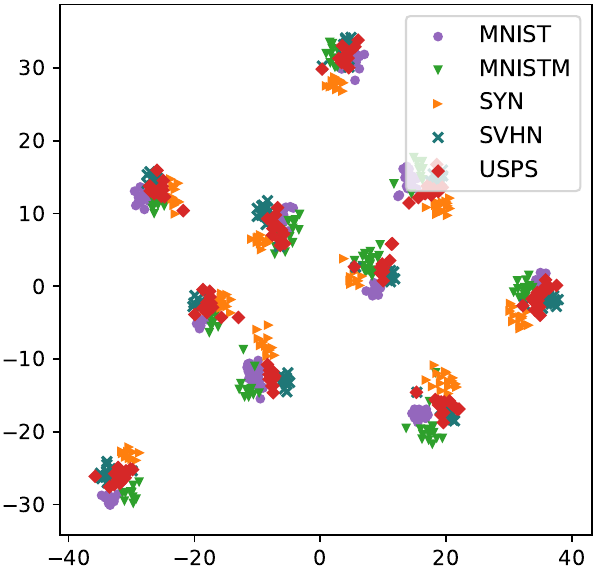}
      \label{tsne_fada}}
    \end{minipage}
  \caption{\small Feature visualization: t-SNE plot of source-only features, f-DANN~\citep{DANN} features, f-DAN~\citep{long2015} features and FADA features in {\BV{sv},\BV{mm},\BV{mt},\BV{sy}}$\rightarrow${\BV{up}} setting. We use different markers and colors to denote different domains. The data points from target domain have been denoted by red for better visual effect. (Best viewed in color.)}
  \label{fig_analysis}
\end{figure*}

\noindent \textbf{Results and Analysis} The experimental results are shown in Table~\ref{table_digit_five_result}. From the results, we can make the following observations. (1) Model \textbf{\RomanNumeralCaps{3}} achieves 73.6\% average accuracy, significantly outperforming the baselines. (2) The results of model \textbf{\RomanNumeralCaps{1}} and model \textbf{\RomanNumeralCaps{2}} demonstrate the effectiveness of \textit{dynamic attention} and \textit{adversarial alignment}. (3) Federated DA displays much weaker results than \textit{multi-source DA}, demonstrating that the newly proposed UFDA learning setting is very challenging.  

To dive deeper into the feature representation of our model versus other baselines, we plot in Figure~\ref{tsne_source}-\ref{tsne_fada} the t-SNE embeddings of the feature representations learned on {\BV{mm,mt,sv,sy}}$\rightarrow${\BV{up}} task with source-only features, $f$-DANN features, $f$-DAN features and FADA features, respectively. We observe that the feature embeddings of our model have smaller intra-class variance and larger inter-class variance than $f$-DANN and $f$-DAN, demonstrating that our model is capable of generating the desired  feature embedding and can extract \textit{domain-invariant} features across different domains.

\subsection{Experiments on Office-Caltech10}
\textbf{Office-Caltech10}~\citep{gong2012geodesic} This dataset contains 10 common categories shared by Office-31~\citep{office} and Caltech-256 datasets~\citep{griffin2007caltech}. It contains four domains: \textit{Caltech} (\textbf{C}), which are sampled from Caltech-256 dataset, \textit{Amazon} (\textbf{A}), which contains images collected from \url{amazon.com}, \textit{Webcam} (\textbf{W}) and \textit{DSLR} (\textbf{D}), which contains images taken by web camera and DSLR camera under office environment. 

\begin{table*}[t]
    \addtolength{\tabcolsep}{-1pt}

    \centering
    
    \begin{tabular}{cccccc}
        \Xhline{1pt}
        Method & \textbf{C},\textbf{D},\textbf{W} $\rightarrow$ \textbf{A }& \textbf{A},\textbf{D},\textbf{W }$\rightarrow$ \textbf{C} & \textbf{A},\textbf{C},\textbf{W }$\rightarrow$ \textbf{D} & \textbf{A},\textbf{C},\textbf{D} $\rightarrow$ \textbf{W }& Average \\
        \hline
        AlexNet& 80.1$\pm$0.4 & 86.9$\pm$0.3 & 82.7$\pm$0.5  & 85.1$\pm$0.3& 83.7 \\
        $f$-DAN & 82.5$\pm$0.5 & 87.2$\pm$0.4& 85.6$\pm$0.4& 86.1$\pm$0.3 & 85.4 \\
        $f$-DANN & 83.1$\pm$0.4 & 86.5$\pm$0.5 & 84.8$\pm0.5$& 86.4$\pm$0.5&   85.2\\

        \textbf{FADA}+\textit{attention} (\textbf{\RomanNumeralCaps{1}}) & 81.2$\pm$0.3 & 87.1$\pm$0.6 & 83.5$\pm0.5$& 85.9$\pm$0.4 & 84.4 \\
        \textbf{FADA}+\textit{adversarial} (\textbf{\RomanNumeralCaps{2}}) & 83.1$\pm$0.6 & 87.8$\pm$0.4 & 85.4$\pm0.4$& 86.8$\pm$0.5& 85.8 \\
        \textbf{FADA}+\textit{disentangle} (\textbf{\RomanNumeralCaps{3}}) &  \textbf{84.3}$\pm$0.6 & 88.4$\pm$0.5 & 86.1$\pm$0.4& 87.3$\pm$0.5 & \underline{86.5} \\
        \Xhline{1pt}
        ResNet101& 81.9$\pm$0.5 & 87.9$\pm$0.3 & 85.7$\pm$0.5  & 86.9$\pm$0.4& 85.6 \\
        AdaBN&82.2$\pm$0.4 &  88.2$\pm$0.6 & 85.9$\pm$0.7 &87.4$\pm$0.8 & 85.7\\
           AutoDIAL& 83.3$\pm$0.6 &  87.7$\pm$0.8 & 85.6$\pm$0.7 &87.1$\pm$0.6 & 85.9\\
        
        $f$-DAN & 82.7$\pm$0.3 & 88.1$\pm$0.5& \underline{86.5}$\pm$0.3& 86.5$\pm$0.3 & 85.9 \\
        $f$-DANN& 83.5$\pm$0.4 & \underline{88.5}$\pm$0.3 & 85.9$\pm$0.5& 87.1$\pm$0.4& 86.3 \\
            
        \textbf{FADA}+\textit{attention} (\textbf{\RomanNumeralCaps{1}}) & 82.1$\pm$0.5 & 87.5$\pm$0.3 & 85.8$\pm$0.4& 87.3$\pm$0.5 & 85.7 \\
        \textbf{FADA}+\textit{adversarial} (\textbf{\RomanNumeralCaps{2}}) & 83.2$\pm$0.4 & 88.4$\pm$0.3 & 86.4$\pm$0.5& \underline{87.8}$\pm$0.4& \underline{86.5} \\
        \textbf{FADA}+\textit{disentangle} (\textbf{\RomanNumeralCaps{3}}) &  \underline{84.2}$\pm$0.5 & \textbf{88.7}$\pm$0.5 &\textbf{87.1}$\pm$0.6& \textbf{88.1}$\pm$0.4 & \textbf{87.1}\\
        
        \Xhline{1pt}
    \end{tabular}
    \caption{\small Accuracy on \emph{Office-Caltech10} dataset with unsupervised federated domain adaptation protocol. The upper table shows the results for AlexNet backbone and the table below shows the results for ResNet backbone.} 
    \label{table_office10}
    \vspace{-0.2cm}
\end{table*}

\begin{figure*}[t]
    \begin{minipage}{\hsize}
      \centering
      \subfigure[\scriptsize $\mathcal{A}$-Distance ]
      {\includegraphics[width=0.222\hsize]{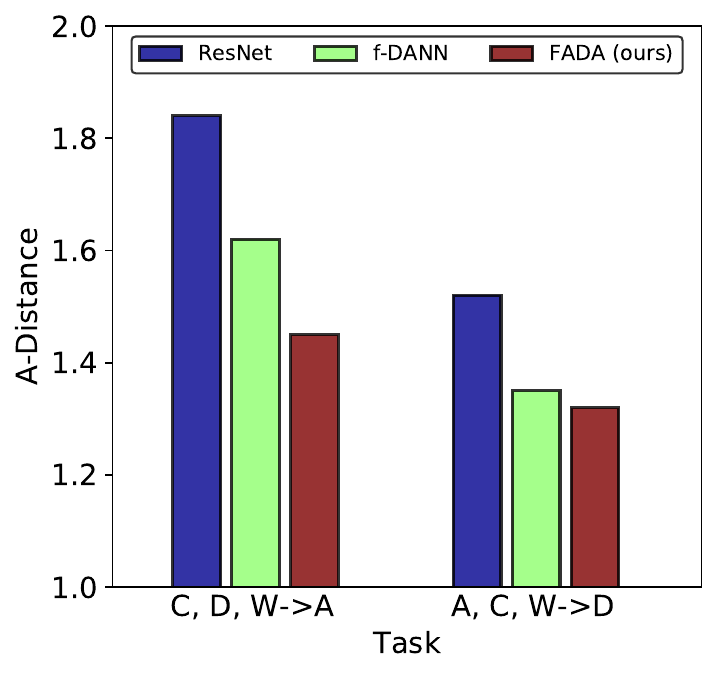}
      \label{a_distance} }
     \centering
      \subfigure[\scriptsize Training loss for \textbf{A},\textbf{C},\textbf{W}$\rightarrow$\textbf{D}]
      {\includegraphics[width=0.262\hsize]{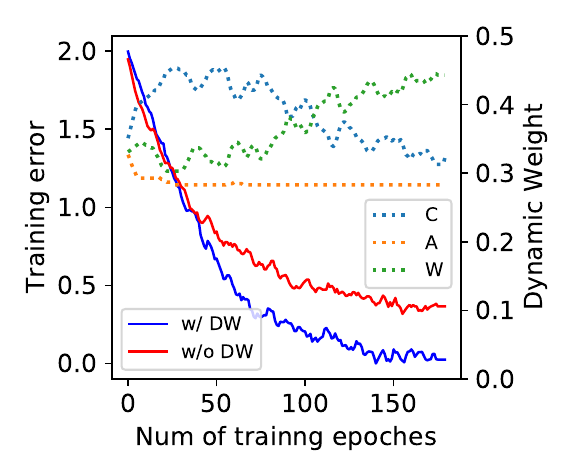}
      \label{office_loss}}
      \centering
      \subfigure[\scriptsize $f$-DAN confusion matrix ]
      {\includegraphics[width=0.217 \hsize]{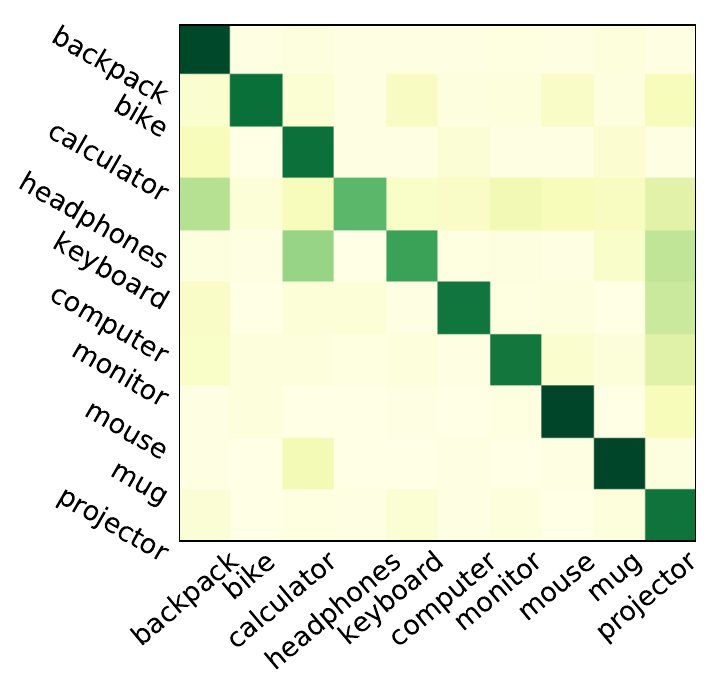}
      \label{office_confusion_dann}}
      \centering
      \subfigure[\scriptsize FADA confusion matrix ]
      { \includegraphics[width=0.249\hsize]{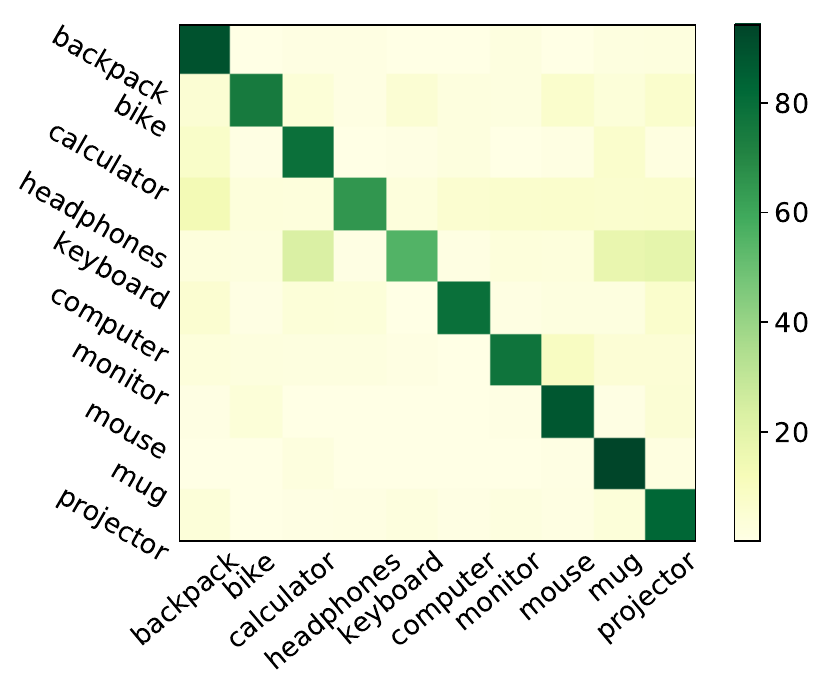}
      \label{office_confusion_fada}}
    \end{minipage}
  \caption{\small(\textbf{a})$\mathcal{A}$-Distance of ResNet, $f$-DANN, and FADA features on two different tasks. (\textbf{b}) training errors and dynamic weight on \textbf{A},\textbf{C},\textbf{W}$\rightarrow$\textbf{D} task. (\textbf{c})-(\textbf{d}) confusion matrices of $f$-DAN, and FADA on \textbf{A},\textbf{C},\textbf{D}$\rightarrow$\textbf{W} task. }
  
  \label{fig_office_confusion}
\end{figure*}
\begin{table*}[t]

\centering
\small
{
\begin{tabular}{c c c c c c  c c}
\Xhline{1.0pt}
\multirow{2}{0.6cm}{Models} & 
\multirow{2}{1cm}{\scriptsize{\BV{inf,pnt,qdr,\\rel,skt$\rightarrow$clp}} } &  
\multirow{2}{1cm}{\scriptsize{\BV{clp,pnt,qdr,\\rel,skt$\rightarrow$inf}} } & 
\multirow{2}{1cm}{\scriptsize{\BV{clp,inf,qdr,\\rel,skt$\rightarrow$pnt}} }& 
\multirow{2}{1cm}{\scriptsize{\BV{clp,inf,pnt,\\rel,skt$\rightarrow$qdr}} }&    
\multirow{2}{1cm}{\scriptsize{\BV{clp,inf,pnt,\\qdr,skt$\rightarrow$rel}} } & 
\multirow{2}{1cm}{\scriptsize{\BV{clp,inf,pnt,\\qdr,rel$\rightarrow$skt}} } & 
\multirow{2}{0.5cm}{{\BU{Avg}}} \\ 
&&&&&&& \\
 \Xhline{0.7pt} 
AlexNet& 39.2$\pm$0.7&12.7$\pm$0.4&32.7$\pm$0.4&5.9$\pm$0.7&40.3$\pm$0.5&22.7$\pm$0.6& 25.6\\
$f$-DAN & 41.6$\pm$0.6 & 13.7$\pm$0.5 & 36.3$\pm$0.5 & 6.5$\pm$0.5& 43.5$\pm$0.8&22.9$\pm$0.5 & 27.4 \\
$f$-DANN&42.6$\pm$0.8 & 14.1$\pm$0.7 & 35.2$\pm$0.3 & 6.2$\pm$0.7 & 42.9$\pm$0.5 &  22.7$\pm$0.7 & 27.2 \\
\textbf{\small FADA}+\textit{\scriptsize disentangle} ({\scriptsize \textbf{\RomanNumeralCaps{3}}}) &\underline{44.9}$\pm$0.7 & \underline{15.9}$\pm$0.6 & 36.3$\pm$0.8 & \textbf{8.6}$\pm$0.8 &44.5$\pm$0.6& 23.2$\pm$0.8 & 28.9\\
 \Xhline{1pt} 

ResNet101& 41.6 $\pm$0.6 & 14.5$\pm$0.7 & 35.7$\pm$0.7 & \underline{8.4}$\pm$0.7 & 43.5$\pm$0.7 & 23.3$\pm$0.7 & 27.7  \\
$f$-DAN & 43.5$\pm$0.7 & 14.1$\pm$0.6 & \underline{37.6}$\pm$0.7 & 8.3$\pm$0.6& 44.5$\pm$0.5&25.1$\pm$0.5 & 28.9 \\
$f$-DANN&43.1$\pm$0.8 & 15.2$\pm$0.9 & 35.7$\pm$0.4 & 8.2$\pm$0.6 & \underline{45.2}$\pm$0.7 &  \textbf{27.1}$\pm$0.6 & 29.1 \\
\textbf{\small FADA}+\textit{\scriptsize disentangle} ({\scriptsize \textbf{\RomanNumeralCaps{3}}}) & \textbf{45.3}$\pm$0.7 &\textbf{16.3}$\pm$0.8 & \textbf{38.9} $\pm$0.7 & 7.9$\pm$0.4 & \textbf{46.7}$\pm$0.4 & \underline{26.8}$\pm$0.4 &\textbf{30.3}\\
\Xhline{1.0pt}
\end{tabular}
} 
\caption{\small Accuracy (\%) on the DomainNet dataset~\citep{lsdac} dataset under UFDA protocol. The upper table shows the results based on AlexNet~\citep{alexnet} backbone and the table below are the results based on ResNet~\citep{resnet} backbone.} 
\label{table_domainnet}

\end{table*}

\begin{table*}[t]


    \centering
    
    \begin{tabular}{cccccc}
        \Xhline{1pt}
        Method & D,E,K $\rightarrow$ B & B,E,K $\rightarrow$ D & B,D,K $\rightarrow$ E & B,D,E $\rightarrow$ K & Average \\
        \hline
        Source Only & 74.4$\pm0.3$ & 79.2$\pm$0.4 & 73.5 $\pm$0.2 & 71.4$\pm$0.1 &74.6\\
        $f$-DANN & 75.2$\pm$0.3 & \textbf{82.7}$\pm$0.2 & 76.5$\pm$0.3 & 72.8$\pm$0.4 & 76.8 \\ 
         AdaBN&76.7$\pm$0.3&80.9$\pm$0.3&75.7$\pm$0.2&74.6$\pm$0.3&76.9\\
        AutoDIAL & 76.3$\pm$0.4 &81.3$\pm$0.5 &74.8$\pm$0.4&75.6$\pm$0.2 & 77.1\\
        $f$-DAN & 75.6$\pm$0.2 & \underline{81.6}$\pm$0.3 & \textbf{77.9}$\pm$0.1 & 73.2$\pm$0.2 & 77.6 \\
        \textbf{FADA}+\textit{attention} (\textbf{\RomanNumeralCaps{1}}) & 74.8$\pm$0.2 & 78.9$\pm$0.2 & 74.5$\pm$0.3 & 72.5$\pm$0.2 &75.2\\
        \textbf{FADA}+\textit{adversarial} (\textbf{\RomanNumeralCaps{2}}) &\textbf{79.7}$\pm$0.2 & 81.1$\pm$0.1 &77.3$\pm$0.2 & \underline{76.4}$\pm$0.2 & 78.6\\
        \textbf{FADA}+\textit{disentangle} (\textbf{\RomanNumeralCaps{3}}) & \underline{78.1}$\pm$0.2& \textbf{82.7}$\pm$0.1& \underline{77.4}$\pm$0.2 & \textbf{77.5}$\pm$0.3 & \textbf{78.9} \\

        \Xhline{1pt}
    \end{tabular}
    \caption{\small Accuracy (\%) on ``Amazon Review'' dataset with unsupervised federated domain adaptation protocol.} 
    \label{table_sentiment}
\end{table*}

We leverage two popular networks as the backbone of feature generator $G$, \textit{i.e.} AlexNet~\citep{alexnet} and ResNet~\citep{resnet}. Both the networks are pre-trained on ImageNet~\citep{ImageNet}.
Other components of our model are randomly initialized with the normal distribution. In the learning process, we set the learning rate of randomly initialized parameters to ten times of the pre-trained parameters as it will take more time for those parameters to converge. Details of our model are listed in Table~\ref{tab:image_arch} (supplementary material). 

\noindent \textbf{Results and Analysis} The experimental results on Office-Caltech10 datasets are shown in Table~\ref{table_office10}. We utilize the same backbones as the baselines and separately show the results. 
We make the following observations from the results: \textbf{(1)} Our model achieves \textbf{86.5}\% accuracy with an AlexNet backbone and \textbf{87.1}\% accuracy with a ResNet backbone, outperforming the compared baselines. \textbf{(2)} All the models have similar performance when \textbf{C},\textbf{D},\textbf{W} are selected as the target domain, but perform worse when \textbf{A} is selected as the target domain. 
This phenomenon is probably caused by the large domain gap, as the images in \textbf{A} are collected from \url{amazon.com} and contain a white background. 

To better analyze the effectiveness of FADA, we perform the following empirical analysis: \textbf{(1)} \textbf{$\mathcal{A}$-distance} \citet{ben2010theory} suggests $\mathcal{A}$-distance as a measure of domain discrepancy. Following \citet{long2015}, we calculate the approximate $\mathcal{A}$-distance ${\hat d_{\cal A}} = 2\left( {1 - 2\epsilon } \right)$ for \textbf{C},\textbf{D},\textbf{W}$\rightarrow$\textbf{A} and \textbf{A},\textbf{C},\textbf{W}$\rightarrow$\textbf{D} tasks, where $\epsilon$ is the generalization error of a two-sample classifier (e.g. kernel SVM) trained on the binary problem of distinguishing input samples between the source and target domains. In Figure~\ref{a_distance}, we plot for tasks with raw ResNet features, $f$-DANN features, and FADA features, respectively. We observe that the ${\hat d_{\cal A}}$ on DADA features are smaller than ResNet features and $f$-DANN features, demonstrating that FADA features are harder to be distinguished between source and target. \textbf{(2)} To show how the dynamic attention mechanism benefits the training process, we plot the training loss \textit{w/} or \textit{w/o} dynamic weights for \textbf{A},\textbf{C},\textbf{W}$\rightarrow$\textbf{D} task in Figure~\ref{office_loss}. The figure shows the target model's training error is much smaller when dynamic attention is applied, which is consistent with the quantitative results. In addition, in \textbf{A},\textbf{C},\textbf{W}$\rightarrow$\textbf{D} setting, the weight of \textbf{A} decreases to the lower bound after first a few epochs and the weight of \textbf{W} increases during the training process, as photos in both \textbf{D} and \textbf{W} are taken in the same environment with different cameras.  \textbf{(3)} To better analyze the error mode, we plot the confusion matrices for $f$-DAN  and FADA on \textbf{A},\textbf{C},\textbf{D}->\textbf{W} task in Figure~\ref{office_confusion_dann}-\ref{office_confusion_fada}. The figures show that $f$-DAN mainly confuses ''calculator'' \textit{vs.} ``keyboard'', ``backpack'' with ``headphones'', while FADA is able to distinguish them with disentangled features.

\subsection{Experiments on DomainNet}
\noindent \textbf{DomainNet}~\footnote{\url{http://ai.bu.edu/M3SDA/}} This dataset contains approximately 0.6 million images distributed among 345 categories. It comprises of six domains: \textit{Clipart} ({\BV{clp}}), a collection of clipart images; \textit{Infograph} ({\BV{inf}}), infographic images with specific object; \textit{Painting} ({\BV{pnt}}), artistic depictions of object in the form of paintings; \textit{Quickdraw} ({\BV{qdr}}), drawings from the worldwide players of game ``Quick Draw!"\footnote{\url{https://quickdraw.withgoogle.com/data}}; \textit{Real} ({\BV{rel}}, photos and real world images; and \textit{Sketch} ({\BV{skt}}), sketches of specific objects. This dataset is very large-scale and contains rich and informative vision cues across different domains, providing a good testbed for unsupervised federated domain adaptation. Some sample images can be found in Figure~\ref{fig_dataset_overview}. 

\noindent \textbf{Results} The experimental results on DomainNet are shown in Table~\ref{table_domainnet}. Our model achieves \textbf{28.9}\% and \textbf{30.3}\% accuracy with AlexNet and ResNet backbone, respectively. In both scenarios, our model outperforms the baselines, demonstrating the effectiveness of our model on large-scale dataset. Note that this dataset contains about 0.6 million images, and so even a one-percent performance improvement is not trivial. From the experiment results, we can observe that all the models deliver less desirable performance when \textit{infograph} and \textit{quickdraw} are selected as the target domains. This phenomenon is mainly caused by the large domain shift between {\BV{inf}}/{\BV{qdr}} domain and other domains.

\subsection{Experiments on Amazon Review}
\noindent \textbf{Amazon Review}~\citep{ref:blitzer} This dataset provides a testbed for cross-domain sentimental analysis of text. The task is to identify whether the sentiment of the reviews is positive or negative. The dataset contains reviews from \url{amazon.com} users for four popular merchandise categories: \textit{Books} (\textbf{B}), \textit{DVDs} (\textbf{D}), \textit{Electronics} (\textbf{E}), and \textit{Kitchen appliances} (\textbf{K}). Following \citet{gong2013connecting}, we utilize 400-dimensional bag-of-words representation and leverage a fully connected deep neural network as the backbone. The detailed architecture of our model can be found in Table~\ref{tab:sentiment_arch} (supplementary materials).

\noindent \textbf{Results} The experimental results on Amazon Review dataset are shown in Table~\ref{table_sentiment}. Our model achieves an accuracy of \textbf{78.9}\% and outperforms the compared baselines. We make two major observations from the results: (1) Our model is not only effective on vision tasks but also performs well on linguistic tasks under UFDA learning schema.
(2) From the results of model \textbf{\RomanNumeralCaps{1}} and \textbf{\RomanNumeralCaps{2}}, we can observe the dynamic attention and federated adversarial alignment are beneficial to improve the performance. However, the performance boost from Model \textbf{\RomanNumeralCaps{2}} to Model \textbf{\RomanNumeralCaps{3}} is limited. This phenomenon shows that the linguistic features are harder to disentangle comparing to visual features. 

\subsection{Ablation Study}
To demonstrate the effectiveness of dynamic attention, we perform the ablation study analysis. The Table~\ref{table_ablation} shows the results on ``Digit-Five'', Office-Caltech10 and Amazon Review benchmark. We observe that the performance drops in most of the experiments when dynamic attention model is \textbf{not} applied. The dynamic attention model is devised to cope with \textbf{\textit{the varying convergence rates in the federated learning system}}, \textit{i.e.}, different source domains have their own convergence rate. In addition, it will increase the weight of a specific domain when the domain shift between that domain and the target domain is small, and decrease the weight otherwise.

\begin{table*}[t!]
\setlength{\tabcolsep}{0.185em}
\vspace{-0.4cm}
\centering
\small
{
\begin{tabular}{c| c| c| c| c| c| c ||c |c |c |c |c||c|c|c|c|c}
{target} &
 {\scriptsize{\BV{mm}} } &  
 {\scriptsize{\BV{mt}} } & 
 {\scriptsize{\BV{sv}} }& 
 {\scriptsize{\BV{sy}} }&    
 {\scriptsize{\BV{up}} } & 
{{{Avg}}} &
\textbf{A}&
\textbf{C}&
\textbf{D}&
\textbf{W}&
Avg&
B&
D&
E&
K&Avg\\ 

\hline
\textbf{FADA} \textit{w/o. attention} & 60.1 & 91.2  & 49.2  & 69.1 & 90.2& \textbf{71.9} & 83.3 &  85.7 & 86.2 &88.3 & \textbf{85.8}&77.2&82.8&77.2&76.3&\textbf{78.3}\\
\textbf{FADA} \textit{w. attention} & 62.5 & 91.4  & 50.5  & 71.8 & 91.7& \textbf{73.6} &84.2&88.7&87.1&88.1&\textbf{87.1} & 78.1&82.7&77.4&77.5&\textbf{78.9}\\

\end{tabular}
} 
\vspace{-0.2cm}
\caption{The ablation study results show that the dynamic attention module is essential for our model.}
\label{table_ablation}
\vspace{-0.4cm}

\end{table*}

\section{Conclusion}
In this paper, we first proposed a novel unsupervised federated domain adaptation (UFDA) problem and derived a theoretical generalization bound for UFDA. Inspired by the theoretical results, we proposed a novel model called Federated Adversarial Domain Adaptation (FADA) to transfer the knowledge learned from distributed source domains to an unlabeled target domain with a novel dynamic attention schema. Empirically, we showed that feature disentanglement boosts the performance of FADA in UFDA tasks. An extensive empirical evaluation on UFDA vision and linguistic benchmarks demonstrated the efficacy of FADA against several domain adaptation baselines.  

\bibliography{iclr2020_conference}
\bibliographystyle{iclr2020_conference}

\newpage

\section {Notations}
\rebuttal{We provide the explanations of notations occurred in this paper. }
\begin{table}[!h]
    \centering
    \begin{tabular}{|c|l|l|}
    \hline
    \rebuttal{Notations} & \rebuttal{Name} & \rebuttal{Details}\\
    \hline
         \rebuttal{$DI$}&\rebuttal{domain identifier}&\rebuttal{align the domain pair ($G_i$,$G_t$)}\\
        \rebuttal{$D$}&\rebuttal{disentangler}&\rebuttal{disentangle the feature to $f_{di}$ and feature $f_{ds}$.}\\
        \rebuttal{$C$}&\rebuttal{classifier}& \rebuttal{predict the labels for input features}\\
        \rebuttal{$CI$}&\rebuttal{class identify}& \rebuttal{extract domain-specific features.}\\ 
        \rebuttal{$G_i$}&\rebuttal{feature generator}&\rebuttal{ extract features for $\mathcal{D}_{S_i}$}\\
        \rebuttal{$G_t$}& \rebuttal{feature generator}& \rebuttal{extract features for ${\mathcal{D}}_t$} \\
        \rebuttal{$M$}&\rebuttal{mutual information estimator}& \rebuttal{estimate the mutual information between $f_{ds}$ and $f_{di}$}\\
        \hline
    \end{tabular}
    \caption{\rebuttal{Notations occurred in the paper.} }
    \label{tab:my_label}
\end{table}

\section{Model Architecture}
We provide the detailed model architecture (Table~\ref{tab:digit_arch} and Table~\ref{tab:image_arch}) for each component in our model: Generator, Disentangler, Domain Classifier, Classifier and MINE.
\begin{table}[!ht]
    \centering
   
    \begin{tabular}{c|l}
        \noalign{\hrule height 1pt}
        layer & configuration \\
        \noalign{\hrule height 1pt}
        \multicolumn{2}{c}{Feature Generator} \\
        \noalign{\hrule height 1pt}
        1 & Conv2D (3, 64, 5, 1, 2), BN, ReLU, MaxPool \\
        \hline
        2 & Conv2D (64, 64, 5, 1, 2), BN, ReLU, MaxPool \\
        \hline
        3 & Conv2D (64, 128, 5, 1, 2), BN, ReLU \\
        \noalign{\hrule height 1pt}     
        \multicolumn{2}{c}{Disentangler} \\
        \noalign{\hrule height 1pt}
        1 & FC (8192, 3072), BN, ReLU\\
        \hline
        2 &  DropOut (0.5), FC (3072, 2048), BN, ReLU \\
        \noalign{\hrule height 1pt}     
        \multicolumn{2}{c}{Domain Identifier} \\
        \noalign{\hrule height 1pt}
        1 & FC (2048, 256), LeakyReLU \\
        \hline
        2 & FC (256, 2), LeakyReLU \\
        \noalign{\hrule height 1pt}     
        \multicolumn{2}{c}{Class Identifier} \\
        \noalign{\hrule height 1pt}
        1 & FC (2048, 10), BN, Softmax \\
        \noalign{\hrule height 1pt}     
        \multicolumn{2}{c}{Reconstructor} \\
        \noalign{\hrule height 1pt}
        1 & FC (4096, 8192) \\
        \noalign{\hrule height 1pt}     
        \multicolumn{2}{c}{Mutual Information Estimator} \\
        \noalign{\hrule height 1pt}
        fc1\_x & FC (2048, 512), LeakyReLU \\
        \hline
        fc1\_y & FC (2048, 512), LeakyReLU \\
        \hline
        2 & FC (512,1)\\
        \noalign{\hrule height 1pt}
    \end{tabular}
    \vspace{0.2cm}
    \caption{Model architecture for digit recognition task (``Digit-Five'' dataset). For each convolution layer, we list the input dimension, output dimension, kernel size, stride, and padding. For the fully-connected layer, we provide the input and output dimensions. For drop-out layers, we provide the probability of an element to be zeroed.}
     \label{tab:digit_arch}
\end{table}
\begin{table}[!ht]
    \centering
   
    \begin{tabular}{c|l}
        \noalign{\hrule height 1pt}
        layer & configuration \\
        \noalign{\hrule height 1pt}
        \multicolumn{2}{c}{Feature Generator} \\
        \noalign{\hrule height 1pt}
        1 & FC (400, 128), BN, ReLU \\
        \hline
        \noalign{\hrule height 1pt}     
        \multicolumn{2}{c}{Disentangler} \\
        \noalign{\hrule height 1pt}
        1 & FC (128, 64), BN, ReLU\\
        \hline
        2 &  DropOut (0.5), FC (64, 32), BN, ReLU \\
        \noalign{\hrule height 1pt}     
        \multicolumn{2}{c}{Domain Identifier} \\
        \noalign{\hrule height 1pt}
        1 & FC (64, 32), LeakyReLU \\
        \hline
        2 & FC (32, 2), LeakyReLU \\
        \noalign{\hrule height 1pt}     
        \multicolumn{2}{c}{Class Identifier} \\
        \noalign{\hrule height 1pt}
        1 & FC (32, 2), BN, Softmax \\
        \noalign{\hrule height 1pt}     
        \multicolumn{2}{c}{Reconstructor} \\
        \noalign{\hrule height 1pt}
        1 & FC (64, 128) \\
        \noalign{\hrule height 1pt}     
        \multicolumn{2}{c}{Mutual Information Estimator} \\
        \noalign{\hrule height 1pt}
        fc1\_x & FC (32, 16), LeakyReLU \\
        \hline
        fc1\_y & FC (32, 16), LeakyReLU \\
        \hline
        2 & FC (16,1)\\
        \noalign{\hrule height 1pt}
    \end{tabular}
    \caption{Model architecture for cross-doman sentimental analysis task (``Amazon Review'' dataset~\citep{ref:blitzer}). For the fully-connected layers (FC), we provide the input and output dimensions. For drop-out layers (Dropout), we provide the probability of an element to be zeroed.}
     \label{tab:sentiment_arch}
\end{table}
\begin{table}[!t]
    \centering
    
    {
    {
    \begin{tabular}{c|l}
        \noalign{\hrule }
        layer & configuration \\
        \noalign{\hrule }
        \multicolumn{2}{c}{Feature Generator: ResNet101 or AlexNet} \\
        \noalign{\hrule }
        \multicolumn{2}{c}{Disentangler} \\
        \noalign{\hrule }
        1 & Dropout(0.5), FC (2048, 2048), BN, ReLU \\
        \hline
        2 & Dropout(0.5), FC (2048, 2048), BN, ReLU \\
        \noalign{\hrule }     
        \multicolumn{2}{c}{Domain Identifier} \\
        \noalign{\hrule }
        1 & FC (2048, 256), LeakyReLU \\
        \hline
        2 & FC (256, 2), LeakyReLU \\
        \noalign{\hrule }     
        \multicolumn{2}{c}{Class Identifier} \\
        \noalign{\hrule }
        1 & FC (2048, 10), BN, Softmax \\
        \noalign{\hrule }     
        \multicolumn{2}{c}{Reconstructor} \\
        \noalign{\hrule }
        1 & FC (4096, 2048) \\
        \noalign{\hrule }     
        \multicolumn{2}{c}{Mutual Information Estimator} \\
        \noalign{\hrule }
        fc1\_x & FC (2048, 512), LeakyReLU \\
        \hline
        fc1\_y & FC (2048, 512), LeakyReLU \\
        \hline
        2 & FC (512,1)\\
        \noalign{\hrule }
    \end{tabular}}}
    \caption{Model architecture for image recognition task (Office-Caltech10~\citep{gong2012geodesic} and DomainNet~\citep{lsdac}). For each convolution layer, we list the input dimension, output dimension, kernel size, stride, and padding. For the fully-connected layer, we provide the input and output dimensions. For drop-out layers, we provide the probability of an element to be zeroed.}
    \label{tab:image_arch}
\end{table}

\section{Details of datasets}
We provide the detailed information of datasets. For Digit-Five and DomainNet, we provide the train/test split for each domain. For Office-Caltech10, we provide the number of images in each domain. For Amazon review dataset, we show the detailed number of positive reviews and negative reviews for each merchandise category.
\begin{table}[!ht]

\label{tab:dataset_detail_num}
\centering

{
\begin{tabular}{c|ccccccc}

\Xhline{1pt}

\multicolumn{8}{c}{Digit-Five} \\

\Xhline{1pt}

 Splits& {\BV{mnist}} & {\BV{mnist_m}} & {\BV{svhn}} & {\BV{syn}} & {\BV{usps}} & & Total \\
 \hline
 
 Train & 25,000 & 25,000 & 25,000 & 25,000 & 7,348 & & 107,348\\
 Test &  9,000 & 9,000 & 9,000 & 9,000 & 1,860 & & 37,860 \\
 \Xhline{1pt}
 
\multicolumn{8}{c}{Office-Caltech10} \\

\Xhline{1pt}

 Splits &  &{\BV{Amazon}} & {\BV{Caltech}} & {\BV{Dslr}} & {\BV{Webcam}}&  & Total \\
\hline
 Total & & 958 & 1,123 & 157 & 295 &  & 2,533 \\
 \Xhline{1pt}
\multicolumn{8}{c}{DomainNet}\\
\Xhline{1pt}
Splits & {\BV{clp}} & {\BV{inf}} & {\BV{pnt}} & {\BV{qdr}} & {\BV{rel}}& {\BV{skt}}& Total \\
 \hline
Train & 34,019 & 37,087 & 52,867 & 120,750 & 122,563 & 49,115 & 416,401 \\
Test & 14,818 & 16,114 & 22,892 & 51,750 & 52,764 & 21,271 & 179,609 \\
\hline
\multicolumn{8}{c}{Amazon Review}\\
\Xhline{1pt}
Splits & &{\BV{Books}} & {\BV{DVDs}} & {\BV{Electronics}} & {\BV{Kitchen}} & & Total \\
 \hline
Positive & &1,000 & 1,000 & 1,000 & 1,000 & &4,000 \\
Negative & &1,000 & 1,000 & 1,000 & 1,000 & &4,000 \\
\hline
\end{tabular}
}
\vspace{0.2cm}
\caption{Detailed number of samples we used in our experiments.} 
\end{table}

\clearpage


\section{Proof of Theorem~\ref{thm:error_bound}}

\begin{theorem} (Weighted error bound for federated domain adaptation).
\label{thm:error_bound_supp}
Let $\mathcal{H}$ be a hypothesis class with VC-dimension d and $\{\widehat{\mathcal{D}}_{S_i}\}_{i=1}^N$, $\widehat{\mathcal{D}}_T$ be empirical distributions induced by a sample of size m from each source domain and target domain in a federated learning system, respectively. Then, $\forall \alpha \in \mathbb{R}_{+}^{N}$, $\sum_{i=1}^{N}{\alpha_i}=1$, with probability at least $1-\delta$ over the choice of samples, for each $h\in\mathcal{H}$,
\begin{equation}
    \label{equ:federated_bound_supp}
    \epsilon_T(h_T)\leq 
    \underbrace{\widehat{\epsilon}_{\Tilde{S}}(\sum_{i\in[N]}\alpha_i h_{S_i})}_{\text{error on source }}+
    \sum_{i\in[N]}\alpha_i\bigg(\frac{1}{2} \underbrace{\widehat{d}_{\mathcal{H}\Delta\mathcal{H}}(\widehat{\mathcal{D}}_{S_i},\widehat{\mathcal{D}}_T)}_{(\mathcal{D}_{S_i}, \mathcal{D}_T) \text{ divergence}}+\lambda_i\bigg) + \underbrace{4\sqrt{\frac{2d\log(2Nm) + \log(4/\delta)}{Nm}}}_{\text{VC-Dimension Constraint}}
\end{equation}
where $\lambda_i$ is the risk of the optimal hypothesis on the mixture of $\mathcal{D}_{S_i}$ and $T$, and $\Tilde{S}$ is the mixture of source samples with size $Nm$.
\end{theorem}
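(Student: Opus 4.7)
The plan is to derive the bound in three phases: a convexity step for the aggregate hypothesis, $N$ applications of the single-source bound (Theorem~\ref{thm:shai}), and a uniform convergence step that leverages the pooled sample of size $Nm$.

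First, because $h_T = \sum_i \alpha_i h_{S_i}$ is a convex combination ($\alpha \in \mathbb{R}_+^N$, $\sum_i \alpha_i = 1$) and the map $h \mapsto |h(x) - g(x)|$ is convex, Jensen's inequality applied pointwise gives $|h_T(x) - g(x)| = |\sum_i \alpha_i (h_{S_i}(x) - g(x))| \leq \sum_i \alpha_i |h_{S_i}(x) - g(x)|$. Taking expectation under $\mathcal{D}_T$ yields $\epsilon_T(h_T) \leq \sum_i \alpha_i \epsilon_T(h_{S_i})$. The same pointwise inequality on $\mathcal{D}_{S_j}$ will be needed later for the empirical source term.

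Second, I would apply Theorem~\ref{thm:shai} at the population level to each source-target pair $(\mathcal{D}_{S_i}, \mathcal{D}_T)$ with the specific hypothesis $h_{S_i}$, obtaining $\epsilon_T(h_{S_i}) \leq \epsilon_{S_i}(h_{S_i}) + \tfrac{1}{2} d_{\mathcal{H}\Delta\mathcal{H}}(\mathcal{D}_{S_i}, \mathcal{D}_T) + \lambda_i$. Weighting by $\alpha_i$ and summing produces the population analogue of the bound with no concentration term yet.

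Third, I would pass to empirical quantities. Each $d_{\mathcal{H}\Delta\mathcal{H}}(\mathcal{D}_{S_i}, \mathcal{D}_T)$ is replaced by $\hat{d}_{\mathcal{H}\Delta\mathcal{H}}(\hat{\mathcal{D}}_{S_i}, \hat{\mathcal{D}}_T)$ via the VC concentration inequality that underlies Theorem~\ref{thm:shai}, and the source-error sum $\sum_i \alpha_i \epsilon_{S_i}(h_{S_i})$ is identified with the mixture quantity $\hat{\epsilon}_{\tilde{S}}(h_T)$ on the $\alpha$-weighted empirical mixture $\tilde{S}$ of total size $Nm$. Crucially, a single VC uniform-convergence inequality applied to the pooled sample of size $Nm$ (rather than one per source at size $m$) yields the sharper term $4\sqrt{(2d\log(2Nm)+\log(4/\delta))/(Nm)}$; a union bound at confidence $\delta/(N+1)$ over the $N$ divergence estimates and the one mixture-error estimate is absorbed into the $\log(4/\delta)$ factor.

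The main obstacle is the reconciliation of $\sum_i \alpha_i \epsilon_{S_i}(h_{S_i})$ with the empirical mixture-error $\hat{\epsilon}_{\tilde{S}}(\sum_i \alpha_i h_{S_i})$ appearing in the statement. At face value these differ by cross terms $\epsilon_{S_j}(h_{S_i})$ with $i \neq j$, which must be absorbed either (a) into the divergence terms via $|\epsilon_{S_i}(h) - \epsilon_{S_j}(h)| \leq \tfrac{1}{2}[d_{\mathcal{H}\Delta\mathcal{H}}(\mathcal{D}_{S_i}, \mathcal{D}_T) + d_{\mathcal{H}\Delta\mathcal{H}}(\mathcal{D}_{S_j}, \mathcal{D}_T)]$ by a triangle inequality through $\mathcal{D}_T$, or (b) into the $\lambda_i$ terms using the joint-optimal hypotheses on $\mathcal{D}_{S_i}\cup\mathcal{D}_T$ that witness them. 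This bookkeeping, together with the careful allocation of the union bound so that no individual source's $m$ samples contribute a $\sqrt{\log m / m}$ term and instead everything is controlled by the pooled $Nm$-sample concentration, is where the real technical work lies; the rest is a routine adaptation of the single-source argument of Blitzer et al.
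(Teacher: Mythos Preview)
Your approach differs from the paper's in a way that creates the very obstacle you identify. You decompose the \emph{hypothesis} first (via convexity of $|\cdot|$) and then apply Theorem~\ref{thm:shai} $N$ times, once per source. The paper instead decomposes the \emph{source distribution}: it forms the $\alpha$-mixture $\tilde{\mathcal{D}}_S^\alpha := \sum_i \alpha_i \mathcal{D}_{S_i}$, treats it as a single source with $Nm$ pooled samples, and applies Theorem~\ref{thm:shai} \emph{once} to the pair $(\tilde{\mathcal{D}}_S^\alpha, \mathcal{D}_T)$ with the single hypothesis $h_T$. This immediately yields
\[
\epsilon_T(h_T) \leq \widehat{\epsilon}_{\tilde S}(h_T) + \tfrac{1}{2}\widehat d_{\mathcal{H}\Delta\mathcal{H}}(\widehat{\mathcal{D}}_{\tilde S}, \widehat{\mathcal{D}}_T) + 4\sqrt{\tfrac{2d\log(2Nm)+\log(4/\delta)}{Nm}} + \lambda_\alpha,
\]
so the source-error term $\widehat{\epsilon}_{\tilde S}(\sum_i \alpha_i h_{S_i})$ and the pooled $Nm$ concentration term appear directly, with no reconciliation and no union bound. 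The only remaining work is to split $\widehat d_{\mathcal{H}\Delta\mathcal{H}}(\widehat{\mathcal{D}}_{\tilde S}, \widehat{\mathcal{D}}_T)$ and $\lambda_\alpha$, both of which are subadditive in the source measure by the triangle inequality applied to $|\Pr_{\tilde S}(A) - \Pr_T(A)| = |\sum_i \alpha_i(\Pr_{S_i}(A) - \Pr_T(A))|$, giving $\sum_i \alpha_i \widehat d_{\mathcal{H}\Delta\mathcal{H}}(\widehat{\mathcal{D}}_{S_i}, \widehat{\mathcal{D}}_T)$ and $\sum_i \alpha_i \lambda_i$.

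Your route genuinely stalls at the step you flag. The quantity your chain produces is $\sum_i \alpha_i \epsilon_{S_i}(h_{S_i})$ (each hypothesis evaluated on its own source), whereas $\widehat\epsilon_{\tilde S}(h_T)$ evaluates the \emph{combined} hypothesis on the \emph{pooled} sample and so involves all cross evaluations $\widehat\epsilon_{S_j}(h_{S_i})$; there is no general inequality between these in the direction you need, and your patch (a) controls $|\epsilon_{S_i}(h) - \epsilon_{S_j}(h)|$ only for a \emph{fixed} $h$, not across different $h_{S_i}$. Likewise, $N$ separate invocations of Theorem~\ref{thm:shai} each see $m$ samples; a union bound inflates the $\log$ factor but cannot convert a $\sqrt{1/m}$ rate into $\sqrt{1/(Nm)}$. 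Both difficulties are artifacts of decomposing the hypothesis rather than the distribution, and they disappear under the paper's single application of Theorem~\ref{thm:shai} to the mixture.
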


\begin{proof}
Consider a combined source domain which is equivalent to a mixture distribution of the $N$ source domains, with the mixture weight $\alpha$, where $\alpha \in \mathbb{R}_{+}^{N}$ and $\sum_{i=1}^{N}{\alpha_i}=1$. Denote the mixture source domain distribution as $\Tilde{D}_S^{\alpha}$ (where $\Tilde{D}_S^{\alpha} :=\sum_{i\in[N]}\alpha_i\mathcal{D}_{S_i}$), and the data sampled from $\Tilde{D}_S^\alpha$ as $\Tilde{S}$. Theoretically, we can assume $\Tilde{D}_S^{\alpha}$ and $\mathcal{D}_T$ to be the source domain and target domain, respectively. Apply Theorem~\ref{thm:shai}, we have that for $0<\delta<1$, with probability of at least 1-$\delta$ over the choice of samples, for each $h\in\mathcal{H}$,
\begin{equation}
\epsilon_T(h)\leq\widehat{\epsilon}_{\Tilde{S}}(h) +\frac{1}{2} \widehat{d}_{\mathcal{H}\Delta\mathcal{H}}(\widehat{\mathcal{D}}_{\Tilde{S}}, \widehat{\mathcal{D}}_T) + 4\sqrt{\frac{2d\log(2Nm) + \log(4/\delta)}{Nm}} + \lambda_{\alpha}
\label{equ:singlebound_supp}
\end{equation}
where $\lambda_{\alpha}$ is the risk of optimal hypothesis on the $\Tilde{S}$ and $T$.
The upper bound of $\widehat{d}_{\mathcal{H}\Delta\mathcal{H}}(\widehat{\mathcal{D}}_{\Tilde{S}}, \widehat{\mathcal{D}}_T)$ can be derived as follows:
\begin{align*}
    \widehat{d}_{\mathcal{H}\Delta\mathcal{H}}(\widehat{\mathcal{D}}_{\Tilde{S}}, \widehat{\mathcal{D}}_T) &= 2 \sup_{A \in \mathcal{A}_{\mathcal{H}\Delta\mathcal{H}}}| \Pr_{\widehat{\mathcal{D}}_{\Tilde{S}}}(A)-\Pr_{\widehat{\mathcal{D}}_T}(A) |\\
    &= 2\sup_{A \in \mathcal{A}_{\mathcal{H}\Delta\mathcal{H}}}|\sum_{i\in[N]}\alpha_{i}\Big( \Pr_{\widehat{\mathcal{D}}_{\Tilde{S_i}}}(A)-\Pr_{\widehat{\mathcal{D}}_T}(A)\Big) |\\
    &\leqslant 2\sup_{A \in \mathcal{A}_{\mathcal{H}\Delta\mathcal{H}}}\sum_{i\in[N]}\alpha_{i}\Big( |\Pr_{\widehat{\mathcal{D}}_{\Tilde{S_i}}}(A)-\Pr_{\widehat{\mathcal{D}}_T}(A)\Big) |\\
    &\leqslant 2\sum_{i\in[N]}\alpha_{i}\sup_{A \in \mathcal{A}_{\mathcal{H}\Delta\mathcal{H}}}\Big( |\Pr_{\widehat{\mathcal{D}}_{\Tilde{S_i}}}(A)-\Pr_{\widehat{\mathcal{D}}_T}(A)\Big) |\\
    &=\sum_{i\in[N]}\alpha_i\widehat{d}_{\mathcal{H}\Delta\mathcal{H}}(\widehat{\mathcal{D}}_{S_i},\widehat{D}_T)
\end{align*}
the first inequality is derived by the triangle inequality. Similarly, with the triangle inequality property, we can derive $\lambda_{\alpha}\leqslant\sum_{i\in [N]}\alpha_i\lambda_i$. On the other hand, for $\forall h_T\in\mathcal{H}$, we have: $\widehat{\epsilon}_{\Tilde{S}}(h_T)=\widehat{\epsilon}_{\Tilde{S}}(\sum_{i\in[N]}\alpha_i h_{S_i})$. Replace $\widehat{\epsilon}_{\Tilde{S}}(h)$, $\lambda_{\alpha}$ and $\widehat{d}_{\mathcal{H}\Delta\mathcal{H}}(\widehat{\mathcal{D}}_{\Tilde{S}}, \widehat{\mathcal{D}}_T)$ in Eq.~\ref{equ:singlebound_supp}, we have:
\begin{align*}
    \epsilon_T(h_T)&\leq\widehat{\epsilon}_{\Tilde{S}}(h_T) +\frac{1}{2} \widehat{d}_{\mathcal{H}\Delta\mathcal{H}}(\widehat{\mathcal{D}}_{\Tilde{S}}, \widehat{\mathcal{D}}_T) + 4\sqrt{\frac{2d\log(2Nm) + \log(4/\delta)}{Nm}} + \lambda_{\alpha}\\
    &=\widehat{\epsilon}_{\Tilde{S}}(\sum_{i\in[N]}\alpha_i h_{S_i})+\frac{1}{2} \widehat{d}_{\mathcal{H}\Delta\mathcal{H}}(\widehat{\mathcal{D}}_{\Tilde{S}}, \widehat{\mathcal{D}}_T) + 4\sqrt{\frac{2d\log(2Nm) + \log(4/\delta)}{Nm}} + \lambda_{\alpha}\\
    &\leq \widehat{\epsilon}_{\Tilde{S}}(\sum_{i\in[N]}\alpha_i h_{S_i})+\frac{1}{2} \sum_{i\in[N]}\alpha_i\widehat{d}_{\mathcal{H}\Delta\mathcal{H}}(\widehat{\mathcal{D}}_{S_i},\widehat{D}_T) + \sum_{i\in [N]}\alpha_i\lambda_i+ 4\sqrt{\frac{2d\log(2Nm) + \log(4/\delta)}{Nm}} \\
    &=\underbrace{\widehat{\epsilon}_{\Tilde{S}}(\sum_{i\in[N]}\alpha_i h_{S_i})}_{\text{error on source }}+
    \sum_{i\in[N]}\alpha_i\bigg(\frac{1}{2} \underbrace{\widehat{d}_{\mathcal{H}\Delta\mathcal{H}}(\widehat{\mathcal{D}}_{S_i},\widehat{\mathcal{D}}_T)}_{(\mathcal{D}_{S_i}, \mathcal{D}_T) \text{ divergence}}+\lambda_i\bigg) + \underbrace{4\sqrt{\frac{2d\log(2Nm) + \log(4/\delta)}{Nm}}}_{\text{VC-Dimension Constraint}}
\end{align*}

\end{proof}

\newtheorem*{remark}{Remark}
\begin{remark} The equation in Theorem~\ref{thm:error_bound} provides a theoretical error bound for unsupervised federated domain adaptation as it assumes that the source data distributed on different nodes can form a mixture source domain. In fact, under the federated learning framework, data across different nodes cannot be shared. The theoretical error bound holds only when the model weights on all nodes are fully synchronized.

\end{remark}

\end{document}